\newcommand\inner[2]{\langle #1, #2 \rangle}
\DeclareMathOperator*{\argmax}{arg\,max}
\DeclareMathOperator*{\argmin}{arg\,min}
\DeclareMathOperator*{\prox}{\operatorname{Prox}}
\newcommand*{\affmark}[1][*]{\textsuperscript{#1}}
\journalname{Journal of Mathematical Imaging and Vision (JMIV)}
\begin{document}

\title{Regularization by architecture: A deep prior approach for inverse problems
}

\author{S{\"o}ren Dittmer\affmark[1,*] \and
        Tobias Kluth\affmark[1,*] \and
        Peter Maass\affmark[1,*] \and
        Daniel Otero Baguer\affmark[1,*]
}

\authorrunning{Dittmer et al.} 

\institute{
    \Letter\hspace{2mm} S{\"o}ren Dittmer\\
    \phantom{\Letter}\hspace{2mm} \href{mailto:sdittmer@math.uni-bremen.de}{sdittmer@math.uni-bremen.de}\\
    \\
    \phantom{\Letter}\hspace{2mm} Tobias Kluth,\\
    \phantom{\Letter}\hspace{2mm} \href{mailto:tkluth@math.uni-bremen.de}{tkluth@math.uni-bremen.de}\\
    \\
    \phantom{\Letter}\hspace{2mm} Peter Maass\\
    \phantom{\Letter}\hspace{2mm} \href{mailto:pmaass@math.uni-bremen.de}{pmaass@math.uni-bremen.de}\\
    \\
    \phantom{\Letter}\hspace{2mm} Daniel Otero Baguer\\
    \phantom{\Letter}\hspace{2mm} \href{mailto:otero@math.uni-bremen.de}{otero@math.uni-bremen.de}\\
    \\
    \affmark[1]\hspace{2mm} Center for Industrial Mathematics (ZeTeM),\\
    \phantom{\affmark[1]}\hspace{2mm} University of Bremen, Germany\\
    \\
    \affmark[*]\hspace{2mm} Equal contribution\\
    \\
    The code of the analytic deep prior implementation is available at \url{https://github.com/oterobaguer/analytic-deep-prior}\\
}


\date{Received: 30 November 2018 / Accepted: 18 October 2019}

\maketitle
\begin{abstract}
The present paper studies so-called deep image prior (DIP) techniques in the context of ill-posed inverse problems. DIP networks have been recently in\-tro\-duced for applications in image processing; also first experimental results for applying DIP to inverse problems have been reported. This paper aims at discussing different interpretations of DIP and to obtain analytic results for specific network designs and linear operators. The main contribution is to introduce the idea of viewing these approaches as the optimization of Tikhonov functionals rather than optimizing net\-works. Besides theoretical results, we present numerical verifications. 
\keywords{Inverse Problems \and Deep Learning \and Regularization by Architecture \and Deep Inverse Prior \and Deep Image Prior}
\end{abstract}

\section{Introduction}
\label{intro}
Deep image priors (DIP)  were recently introduced in deep learning for some tasks in image processing \cite{ulyanov2017dip}. Usually, deep learning approaches to inverse problems proceed in two steps. In a first step (training) the parameters $\Theta$ of the deep neural network $\varphi_\Theta$  are optimized by minimizing a suitable loss function using large sets of training data. In a second step (application), new data is fed into the network for solving the desired task.

DIP approaches are radically different; they are based on unsupervised training using only a single data point $y^\delta$. More precisely, in the context of inverse problems, where we aim at solving ill-posed operator equations $Ax \sim y^\delta$, the task of DIP is to train a network $\varphi_\Theta(z)$ with parameters $\Theta$ by minimizing the simple loss function
\begin{equation} 
\min_\Theta \| A \varphi_\Theta(z) - y^\delta \|^2.
\label{basicloss}
\end{equation}
The minimization is with respect to $\Theta,$ the random input $z$ is kept fixed. After training the solution to the inverse problem is approximated directly by $\hat x = \varphi_\Theta(z).$ 

In image processing, common choices for $A$ are the identity operator (denoising) or a projection operator to a subset of the image domain (inpainting). For these applications, it has been observed, that minimizing the functional iteratively by gradient descent methods in combination with a suitable stopping criterion leads to amazing results \cite{ulyanov2017dip}.

Training with a single data point is the most striking property, which separates DIP from other neural network concepts. One might argue that the astonishing results \cite{ulyanov2017dip,mataev2019deepred,Cheng_2019_CVPR,van2018compressed} are only possible if the network architecture is fine-tuned to the specific task. This is true for obtaining optimal performance; nevertheless, the presented numerical results perform well even with somewhat generic network architectures such as autoencoders.

We are interested in analyzing DIP ap\-proach\-es for solving ill-posed inverse problems. As a side remark, we note that the applications (denoising, inpainting) mentioned above are modeled by either identity or projection operators, which are not ill-posed in the functional analytical setting \cite{louis,engl,rieder}. Typical examples of ill-posed inverse problems correspond to compact linear operators such as a large variety of tomographic measurement operators or parameter-to-state mappings for partial differential equations.

We aim at analyzing a specific network architecture $\varphi_\Theta$ and at interpreting the resulting DIP approach as a regularization technique in the functional analytical setting, and also at proving convergence properties for the minimizers of (\ref{basicloss}). In particular, we are interested in network architectures, which themselves can be interpreted as a minimization algorithm that solves a regularized inverse problem of the form
\begin{equation}
x(B) = \argmin_x\frac{1}{2} \| B x - y^\delta\|^2  + \alpha R(x),
\end{equation}
where $R$ is a given convex function and $B$ a learned operator.

In general, deep learning approaches for inverse problems have their own characteristics, and naive applications of neural networks can fail for even the most simple inverse problems, see \cite{maass2018trivial}. However, there is a growing number of compelling numerical experiments using suitable network designs for some of the toughest inverse problems such as photo-acou\-stic tomography \cite{hauptmann2018model} or X-ray tomography with very few measurements \cite{jin2017deep,adler2018learned}. Concerning networks based on deep prior approaches for inverse problems, first experimental investigations have been re\-por\-ted, see \cite{ulyanov2017dip, van2018compressed, mataev2019deepred}. Similar as for the above-mentioned tasks in image processing, DIPs for inverse problems rely on two ingredients:
\begin{enumerate}
    \item A suitable network design, which leads to our phrase ``regularization by architecture''.
    \item Training algorithms for iteratively minimizing (\ref{basicloss}) with respect to $\Theta$ in combination with a suitable stopping criterion.
\end{enumerate}

In this paper, we present different mathematical interpretations of DIP approaches, and we analyze two network designs in the context of inverse problems in more detail. It is organized as follows: In Section 2, we discuss some relations to existing results and make a short survey of the related literature. In Section 3, we then state different interpretations of DIP approaches and the network architectures that we use, as a basis for the subsequent analysis. We start with a first mathematical result for a trivial network design, which yields a connection to Landweber iterations. We then consider a fully connected feedforward network with $L$ identical layers, which generates a proximal gradient descent for a modified Tikhonov functional. In Section 4, we use this last connection to define the notion of analytic deep prior networks, for which one can strictly analyze its regularization and convergence properties. The key to the theoretical findings is a change of view, which allows for the interpretation of DIP approaches as optimizing families of Tikhonov functionals. Finally, we exemplify our theoretical findings with numerical examples for the standard linear integration operator.

\section{Deep prior and related research}

We start with a description of general deep prior concepts. Afterwards, we address similarities and differences to other approaches, such as LISTA \cite{lista}, in more detail.

\subsection{The deep prior approach}
Present results on deep prior networks utilize feedforward architectures.
In general, a feedforward neural network is an algorithm that starts with input $x^0=z$, computes iteratively
\begin{equation*}
x^{k+1} = \phi \left(W_k x^k + b_k  \right)
\end{equation*}
$\mbox{for}\ \  k=0,..,L-1$ and outputs
\begin{equation*}
\varphi_\Theta(z)= x^L \ .
\end{equation*}
The parameters of this system are denoted by
\begin{equation*}
\Theta = \left\{W_0,..,W_{L-1}, b_0,..,b_{L-1}  \right\}
\end{equation*} and $\phi$ denotes a non-linear activation function.

In order to highlight one of the unique features of deep image priors, let us first refer to classical generative networks that require training on large data sets.

In this classical setting we are given an operator $A: X \to Y$ between Hilbert spaces $X,Y$, as well as a set of training data $(x_i,y_i^\delta)$, where $y_i^\delta$ is a noisy version of $Ax_i$ satisfying $\|y_i^\delta - Ax_i\| \le \delta$.
Here the usual deep learning approach is to use a network for direct inversion and the parameters $\Theta$ of the network are obtained by minimizing the loss function
\begin{equation}
\label{classicloss}
\min_\Theta \sum_{i=1}^N\ \|\varphi_\Theta(y_i^\delta) - x_i \|^2\ .
\end{equation}
After training $\Theta$ is fixed and the network is used to approximate the solution of the inverse problem with new data $y^\delta$ by computing
$x=\varphi_\Theta(y^\delta)$.
For a recent survey on this approach and more general deep learning concepts for inverse problems see~\cite{arridge2019solving}.

In general, this approach relies on the underlying assumption, that complex distributions of suitable solutions $x$, e.g., the distribution of natural images, can be approximated by neural networks \cite{fista, unser2007fista, bruna2013invariant}. The parameters $\Theta$ are trained for the specific distribution of training data and are fixed after training.
One then  expects, that choosing  a new data set as input, i.e., $z=y^\delta$ will generate a suitable solution to $Ax \sim y^\delta$ ~\cite{bora2017compressed}. 
Hence, after training the distribution of solutions is parametrized by the inputs $z$.

In contrast, DIP is an unsupervised approach using only a single data point for training. That means, for given data $y^\delta$ and fixed $z$, the parameters $\Theta$ of the network $\varphi_\Theta$ are obtained by minimizing the loss function (\ref{basicloss}). The solution to the inverse problem is then denoted by $\hat x = \varphi_\Theta(z)$. Hence, deep image priors keep $z$ fixed and aim at parameterizing the solution with $\Theta$. It has been observed in several works \cite{ulyanov2017dip, van2018compressed, mataev2019deepred,Cheng_2019_CVPR} that this approach indeed leads to remarkable results for problems such as inpainting or denoising. 

To some extent, the success of deep image priors is rooted in the careful design of network architectures. For example, \ \cite{ulyanov2017dip} uses a U-Net-like “hourglass” architecture with skip connections, and the amazing results show that such an architecture implicitly captures some statistics of natural images. However, in general, the DIP learning process may converge towards noisy images or undesirable reconstructions. The whole success relies on a combination of the architecture with a suitable optimization method and stopping criterion. Nevertheless, the authors claim the architecture has a positive impact on the exploration of the solution space during the iterative optimization of $\Theta$. They show that the training process descends quickly to ``natural-looking'' images but requires much more steps to produce noisy images. This is also supported by the theoretical results of~\cite{saxe2013exact} and the observations of ~\cite{zhang2016understanding},  which shows that deep networks can fit noise very well but need more training time to do so. Another paper that hints in this direction is~\cite{anonymous2019on}, which analyzes whether neural networks could have a bias towards approximating low frequencies.

There are already quite a few works that deal with deep prior approaches. Following, we mention the most relevant ones to our work. The original deep image prior article \cite{ulyanov2017dip} introduces the DIP concept and presents experimental evidence that today's network architectures are in and of themselves conducive to image reconstruction. Another work \cite{van2018compressed} explores the applicability of DIP to problems in compressed sensing. Also, \cite{mataev2019deepred} discusses how to combine DIP with the regularization by denoising approach and \cite{Cheng_2019_CVPR} explores DIP in the context of stationary Gaussian processes. All of these introduce and discuss variants of DIP concepts; however, neither of them addresses the intrinsic regularizing properties of the network concerning ill-posed inverse problems.

\subsection{Deep prior and unrolled proximal gradient architectures}
A major part of this paper is devoted to analyzing the DIP approach in combination with an unrolled proximal gradient network $\varphi_\Theta$. Hence, there is a natural connection to the well-established analysis of LISTA schemes. Before we sketch the state of research in this field, we highlight the two major differences (loss function, training data) to the present approach. LISTA is based on a supervised training using multiple data points $(x_i, y_i^\delta)$, $i=1,..N$ where $y_i^\delta$ is a noisy representation of $Ax_i$. The loss function is \eqref{classicloss}. DIP, however, is based on unsupervised learning using the loss function \eqref{basicloss} and a single data point $y^\delta$. Hence, DIP with the unrolled proximal gradient network shares the architecture with LISTA, but its concept, as well as its analytic properties, are different. Nevertheless, the analysis we will present in Section~\ref{sec:dip_tikhonov} will exhibit structures similar to the ones appearing in the LISTA-related literature. Hence we shortly review the major contributions in this field.

Similarities are most visible when considering algorithms and convergence analysis for sparse coding applications \cite{xin2016maximal, moreau2016understanding, sulam2019multi, liu2018alista, sprechmann2015learning}. The field of sparse coding makes heavy use of proximal splitting algorithms and, since the advent of LISTA, of trained architectures inspired by truncated versions of these algorithms. In the broadest sense, all of these methods are expressions of ``Learning to learn by gradient descent'' \cite{andrychowicz2016learning}. Once more, we would like to emphasize that these results utilize multiple data points while DIP does not require any training data but only one measurement. Another key difference is that we approach the topic from an ill-posed inverse problem perspective, which (a) grounds our approach in the functional analytic realm and (b) considers ill-posed (not only ill-conditioned) problems in the Nashed sense, i.e., allows the treatment of unstable inverses \cite{engl}. These two points fundamentally differentiate the present approach from traditional compressed sensing considerations which usually deal with (a) finite dimensional formulations and (b) forward operators given by well-conditioned, carefully hand-crafted settings or dictionaries, which are optimized using large sets of training data \cite{sprechmann2015learning}.

Coming back to LISTA for sparse coding applications, there are many excellent papers \cite{moreau2016understanding,giryes2018tradeoffs,meinhardt2017learning} which are devoted to a strict mathematical analysis of different aspects of LISTA-like approaches. In \cite{moreau2016understanding}, the authors show under which conditions sparse coding can benefit from LISTA-like trained structures and asks how good trained sparsity estimators can be, given a computational budget. The article \cite{giryes2018tradeoffs} deals with a similar trade-off proposing the quite exciting, ``inexact proximal gradient descent''. The paper \cite{chen2018theoretical} proposes, based on theoretically founded considerations, a sibling architecture to LISTA. Moreover,  \cite{papyan2018theoretical} argues that deep learning architectures, in general, can be interpreted as multi-stage proximal splitting algorithms.

Finally, we want to point at publications, which address deep learning with only a few data points for training, see, e.g., \cite{joergluecke2018} and the references therein. However, they do not address the architectures relevant for our publication, and they do not refer to the specific complications of inverse problems.

\section{Deep prior architectures and interpretations}
\label{interpretations}
In this section, we discuss different perspectives on deep prior networks, which open the path to provable mathematical results.
The first two subsections are devoted to special network architectures, and the last two subsections deal with more general points of view.
 
\subsection{A trivial architecture} 
\label{trivial}
We aim at solving ill-posed inverse problems. For a given operator $A,$ the general task in inverse problems is to recover an approximation for $x^\dagger$ from measured noisy data
\[y^\delta = A x^\dagger + \tau,\]
where $\tau$, with $\|\tau\| \le \delta,$ describes the noise in the measurement.

The deep image prior approach to inverse problems asks to train a network $\varphi_\Theta(z)$ with parameters $\Theta$ and fixed input $z$ by minimizing \ $\| A \varphi_\Theta(z) - y^\delta\|^2$ with an optimization method such as gradient descent with early stopping. After training, a final run of the network computes $\hat x = \varphi_\Theta(z)$ as an approximation to $x^\dagger$.

We consider a trivial single-layer network without activation function, see Figure \ref{fig:simple_network}. This network simply outputs $\Theta,$ i.e., $\varphi_\Theta(z)=\Theta$. In this case, the network parameter $\Theta$ is a vector, which is chosen to have the same dimension as $x$. That means, that training the network by gradient descent of $\| A \varphi_\Theta(z) - y^\delta\|^2 = \| A \Theta - y^\delta\|^2$ with respect to $\Theta$ is equivalent to the classical Landweber iteration, which is a gradient descent method for $\| A x - y^\delta\|^2$ with respect to $x$.

Landweber iterations are slowly converging. However, in combination with a suitable stopping rule, they are optimal regularization schemes for diminishing noise level $\delta \rightarrow 0$, \cite{louis,engl,rieder}. Despite the apparent trivialization of the neural network approach, this shows that there is potential in training such networks with a single data point for solving ill-posed inverse problems.

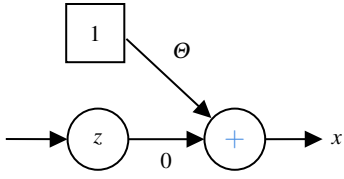
\begin{figure}
    \centering
    \tikzset{every picture/.style={line width=0.75pt}} 

\begin{tikzpicture}[x=0.75pt,y=0.75pt,yscale=-1,xscale=1]

\draw   (102.2,150.18) .. controls (102.2,141.68) and (108.99,134.79) .. (117.36,134.79) .. controls (125.73,134.79) and (132.52,141.68) .. (132.52,150.18) .. controls (132.52,158.68) and (125.73,165.57) .. (117.36,165.57) .. controls (108.99,165.57) and (102.2,158.68) .. (102.2,150.18) -- cycle ;
\draw    (132.52,150.18) -- (168.52,150.18) ;
\draw [shift={(170.52,150.18)}, rotate = 180] [fill={rgb, 255:red, 0; green, 0; blue, 0 }  ][line width=0.75]  [draw opacity=0] (8.93,-4.29) -- (0,0) -- (8.93,4.29) -- cycle    ;

\draw    (71.36,150.09) -- (100.2,150.17) ;
\draw [shift={(102.2,150.18)}, rotate = 180.17] [fill={rgb, 255:red, 0; green, 0; blue, 0 }  ][line width=0.75]  [draw opacity=0] (8.93,-4.29) -- (0,0) -- (8.93,4.29) -- cycle    ;

\draw    (131.55,99.73) -- (172.54,138.18) ;
\draw [shift={(174,139.55)}, rotate = 223.16] [fill={rgb, 255:red, 0; green, 0; blue, 0 }  ][line width=0.75]  [draw opacity=0] (8.93,-4.29) -- (0,0) -- (8.93,4.29) -- cycle    ;

\draw    (200.84,150.18) -- (227.52,150.14) ;
\draw [shift={(229.52,150.14)}, rotate = 539.9100000000001] [fill={rgb, 255:red, 0; green, 0; blue, 0 }  ][line width=0.75]  [draw opacity=0] (8.93,-4.29) -- (0,0) -- (8.93,4.29) -- cycle    ;

\draw   (170.52,150.18) .. controls (170.52,141.68) and (177.31,134.79) .. (185.68,134.79) .. controls (194.05,134.79) and (200.84,141.68) .. (200.84,150.18) .. controls (200.84,158.68) and (194.05,165.57) .. (185.68,165.57) .. controls (177.31,165.57) and (170.52,158.68) .. (170.52,150.18) -- cycle ;
\draw   (101.51,82.17) -- (130.52,82.17) -- (130.52,111.45) -- (101.51,111.45) -- cycle ;

\draw (117.36,150.18) node  [align=left] {$\displaystyle z$};
\draw (185.68,150.18) node [scale=1.2,color={rgb, 255:red, 74; green, 144; blue, 226 }  ,opacity=1 ] [align=left] {$\displaystyle \textcolor[rgb]{0.29,0.56,0.89}{+}$};
\draw (116.02,96.81) node  [align=left] {$\displaystyle 1$};
\draw (159.46,105.51) node  [align=left] {$\displaystyle \Theta$};
\draw (151.46,160.85) node  [align=left] {$\displaystyle 0$};
\draw (137.46,169.85) node  [align=left] {$ $};
\draw (236.68,150.18) node  [align=left] {$\displaystyle x$};

\end{tikzpicture}
    \caption{A simple network with scalar input, a single layer and no activation function. For any arbitrary input $z$  one obtains $\varphi_\Theta(z) = \Theta$}
    \label{fig:simple_network}
\end{figure}

\subsection{Unrolled proximal gradient architecture}
\label{sec:unrolled_architecture}
In this section, we aim at rephrasing DIP, i.e., the minimization of (\ref{basicloss}) with respect to $\Theta$, as an approach for learning optimized Tikhonov functionals for inverse problems. This change of view, i.e.,\ regarding deep inverse priors as optimization of functionals rather than networks, opens the way for analytic investigations in Section \ref{sec:dip_tikhonov}.

We use the particular architecture, which was introduced in \cite{lista}, i.e. a fully connected feedforward network with $L$ layers of identical size,
\begin{equation}
\varphi_\Theta(z)= x^L,
\label{eq:network_1}
\end{equation}
where 
\begin{equation}
x^{k+1} = \phi \left(W x^k + b  \right)
\label{eq:network_2}
\end{equation}
The affine linear map  $\Theta = (W,b)$ is the same for all layers.  
The matrix $W$ is restricted to obey  $I-W = \lambda B^* B$ ($I$ denotes the identity operator) for some $B$ and the bias is determined via $b = \lambda B^* y^\delta$, see Figure \ref{fig:lista_network}.
If the activation function of the network is chosen as the proximal mapping of a regularizing functional $\lambda \alpha R$, then $\varphi_\Theta(z)$
is identical to the $L$-th iterate of a proximal gradient descent method for minimizing
\begin{equation}
J_B(x)= \frac{1}{2} \| B x - y^\delta\|^2  + \alpha R(x),
\label{eq:constraint_functional}
\end{equation}
see \cite{daubechies2004surrogate} or Appendix I.

\begin{remark}
Restricting activation functions to be proximal mappings is not as severe as it might look at first glance. E.g., \ ReLU is the proximal mapping for the indicator function of positive real numbers, and soft shrinkage is the proximal mapping for the modulus function.
\end{remark}

This allows the interpretation that every weight update, i.e., every gradient step for minimizing \eqref{basicloss} with respect to $\Theta$ or $B$, changes the functional $J_B$. Hence, DIP can be regarded as optimizing a functional, which in-turn is minimized by the network. This view is the starting point for investigating convergence properties in Section \ref{sec:dip_tikhonov}.

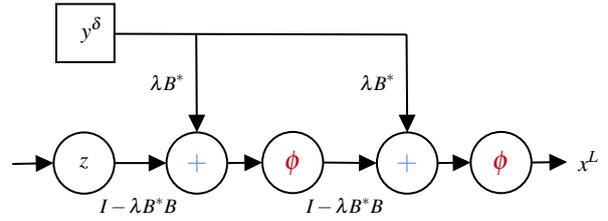
\begin{figure}
    \centering
    \tikzset{every picture/.style={line width=0.75pt}} 

\begin{tikzpicture}[x=0.75pt,y=0.75pt,yscale=-1,xscale=1]

\draw   (33.2,359.18) .. controls (33.2,350.68) and (39.99,343.79) .. (48.36,343.79) .. controls (56.73,343.79) and (63.52,350.68) .. (63.52,359.18) .. controls (63.52,367.68) and (56.73,374.57) .. (48.36,374.57) .. controls (39.99,374.57) and (33.2,367.68) .. (33.2,359.18) -- cycle ;
\draw    (63.52,359.18) -- (87.52,359.18) ;
\draw [shift={(89.52,359.18)}, rotate = 180] [fill={rgb, 255:red, 0; green, 0; blue, 0 }  ][line width=0.75]  [draw opacity=0] (8.93,-4.29) -- (0,0) -- (8.93,4.29) -- cycle    ;

\draw    (12.55,359.64) -- (31.2,359.22) ;
\draw [shift={(33.2,359.18)}, rotate = 538.73] [fill={rgb, 255:red, 0; green, 0; blue, 0 }  ][line width=0.75]  [draw opacity=0] (8.93,-4.29) -- (0,0) -- (8.93,4.29) -- cycle    ;

\draw    (104.35,294.55) -- (104.67,341.79) ;
\draw [shift={(104.68,343.79)}, rotate = 269.61] [fill={rgb, 255:red, 0; green, 0; blue, 0 }  ][line width=0.75]  [draw opacity=0] (8.93,-4.29) -- (0,0) -- (8.93,4.29) -- cycle    ;

\draw    (119.84,359.18) -- (135.37,358.94) ;
\draw [shift={(137.37,358.91)}, rotate = 539.11] [fill={rgb, 255:red, 0; green, 0; blue, 0 }  ][line width=0.75]  [draw opacity=0] (8.93,-4.29) -- (0,0) -- (8.93,4.29) -- cycle    ;

\draw   (89.52,359.18) .. controls (89.52,350.68) and (96.31,343.79) .. (104.68,343.79) .. controls (113.05,343.79) and (119.84,350.68) .. (119.84,359.18) .. controls (119.84,367.68) and (113.05,374.57) .. (104.68,374.57) .. controls (96.31,374.57) and (89.52,367.68) .. (89.52,359.18) -- cycle ;
\draw    (63.52,294.18) -- (209.35,294.55) ;

\draw   (137.37,358.91) .. controls (137.37,350.41) and (144.16,343.52) .. (152.53,343.52) .. controls (160.91,343.52) and (167.69,350.41) .. (167.69,358.91) .. controls (167.69,367.41) and (160.91,374.3) .. (152.53,374.3) .. controls (144.16,374.3) and (137.37,367.41) .. (137.37,358.91) -- cycle ;
\draw    (167.69,358.91) -- (192.52,359.16) ;
\draw [shift={(194.52,359.18)}, rotate = 180.58] [fill={rgb, 255:red, 0; green, 0; blue, 0 }  ][line width=0.75]  [draw opacity=0] (8.93,-4.29) -- (0,0) -- (8.93,4.29) -- cycle    ;

\draw    (209.35,294.55) -- (209.67,341.79) ;
\draw [shift={(209.68,343.79)}, rotate = 269.61] [fill={rgb, 255:red, 0; green, 0; blue, 0 }  ][line width=0.75]  [draw opacity=0] (8.93,-4.29) -- (0,0) -- (8.93,4.29) -- cycle    ;

\draw    (224.84,359.18) -- (239.37,358.94) ;
\draw [shift={(241.37,358.91)}, rotate = 539.06] [fill={rgb, 255:red, 0; green, 0; blue, 0 }  ][line width=0.75]  [draw opacity=0] (8.93,-4.29) -- (0,0) -- (8.93,4.29) -- cycle    ;

\draw   (194.52,359.18) .. controls (194.52,350.68) and (201.31,343.79) .. (209.68,343.79) .. controls (218.05,343.79) and (224.84,350.68) .. (224.84,359.18) .. controls (224.84,367.68) and (218.05,374.57) .. (209.68,374.57) .. controls (201.31,374.57) and (194.52,367.68) .. (194.52,359.18) -- cycle ;
\draw   (241.37,358.91) .. controls (241.37,350.41) and (248.16,343.52) .. (256.53,343.52) .. controls (264.91,343.52) and (271.69,350.41) .. (271.69,358.91) .. controls (271.69,367.41) and (264.91,374.3) .. (256.53,374.3) .. controls (248.16,374.3) and (241.37,367.41) .. (241.37,358.91) -- cycle ;
\draw    (271.69,358.91) -- (287.55,358.67) ;
\draw [shift={(289.55,358.64)}, rotate = 539.13] [fill={rgb, 255:red, 0; green, 0; blue, 0 }  ][line width=0.75]  [draw opacity=0] (8.93,-4.29) -- (0,0) -- (8.93,4.29) -- cycle    ;

\draw   (34.51,279.17) -- (63.52,279.17) -- (63.52,308.45) -- (34.51,308.45) -- cycle ;

\draw (48.36,359.18) node  [align=left] {$\displaystyle z$};
\draw (104.68,359.18) node [scale=1.2,color={rgb, 255:red, 74; green, 144; blue, 226 }  ,opacity=1 ] [align=left] {$\displaystyle \textcolor[rgb]{0.29,0.56,0.89}{+}$};
\draw (89.96,318.51) node [scale=0.9] [align=left] {$\displaystyle \lambda B^{*}$};
\draw (75.46,380.85) node [scale=0.9] [align=left] {$\displaystyle I-\lambda B^{*} B$};
\draw (52.36,292.18) node  [align=left] {$\displaystyle y^{\delta }$};
\draw (152.53,358.91) node [scale=1.2,color={rgb, 255:red, 208; green, 2; blue, 27 }  ,opacity=1 ] [align=left] {$\displaystyle \phi $};
\draw (209.68,359.18) node [scale=1.2,] [align=left] {$\displaystyle \textcolor[rgb]{0.29,0.56,0.89}{+}$};
\draw (256.53,358.91) node [scale=1.2,color={rgb, 255:red, 208; green, 2; blue, 27 }  ,opacity=1 ] [align=left] {$\displaystyle \phi $};
\draw (300.36,358.18) node  [align=left] {$\displaystyle x^{L}$};
\draw (178.46,380.85) node [scale=0.9] [align=left] {$\displaystyle I-\lambda B^{*} B$};
\draw (124.46,384.85) node  [align=left] {$ $};
\draw (194.96,318.51) node [scale=0.9] [align=left] {$\displaystyle \lambda B^{*}$};

\end{tikzpicture}
    \caption{Unrolled proximal gradient network with $L=2$.}
    \label{fig:lista_network}
\end{figure}

\subsection{Two perspectives based on regression}
\label{perspectives}
The following subsections address more general concepts, which open the way to further analytic investigations, which, however, are not considered further in this paper.
The reader interested in the regularization properties for DIP approaches for inverse problems only may jump directly to Section 4.

In this subsection we present two different perspectives on solving inverse problems with the DIP via the minimization of a functional as discussed in the subsection above. The first perspective is based on a reinterpretation of the minimization of the functional (\ref{basicloss}) in the finite, real setting, i.e. $A\in\mathbb{R}^{m\times n}$. This setting allows us to write
\begin{align}
\min_\Theta\|A\varphi_\Theta(z)-y^\delta\|^2 & = \min_{x\in\mathcal{R}(\varphi_\cdot(z))} \|Ax-y^\delta\|^2\\
& = \min_{x\in\mathcal{R}(\varphi_\cdot(z))} \sum_{i=1}^m (x^*a_i-y^\delta_i)^2,
\end{align}
where $\mathcal{R}(\varphi_\cdot(z))$ denotes the range of the network with regard to $\Theta$ for a fixed $z$ and $a_i$ the rows of the matrix $A$ as well as $y^\delta_i$ the entries of the vector $y^\delta$. This setting allows for the interpretation that we are solving a linear regression, parameterized by $x$, which is constrained by a deep learning hypothesis space and given by data pairs of the form $(a_i, y^\delta_i)$.

The second perspective is based on the rewriting of the optimization problem via the method of Lagrange multipliers. We start by considering the constrained optimization problem
\begin{equation}
\min_{x\in X, \Theta}\|Ax-y^\delta\|^2 \text{ s.t. } \|x-\varphi_\Theta(z)\|^2=0.
\end{equation}
If we now assume that $\varphi$ has continuous first partial derivatives with regard to $\Theta$, the Lagrange functional
\begin{equation}
\mathcal{L}(\Theta,x,\lambda) = \|Ax-y^\delta\|^2 + \lambda \|x-\varphi_\Theta(z)\|^2,
\end{equation}
with the correct Lagrange multiplier $\lambda=\lambda_0$, has a stationary point at each minimum of the original constraint optimization problem. This gives us a direct connection to unconstrained variational approaches like Tikhonov functionals.
 
 \subsection{The Bayesian point of view}
The Bayesian approach to inverse problems focuses on computing MAP (maximum a posteriori probability) estimators, i.e.\ one aims for
\begin{equation}
\hat x = \argmax_{x\in X}p(x|y^\delta),
\end{equation}
where $p:X\times Y\to\mathbb{R}_+\cup \{0\}$ is a conditional PDF. From standard Bayesian theory we obtain  
\begin{equation}
    \hat x 
    = \argmin_{x\in X}  \left\{ -\log[p(y^\delta|x)]-\log[p(x)] \right\} \ .
\end{equation}
The setting for  inverse problems, i.e.\ $Ax+\tau = y^\delta $ with
$\tau\sim\mbox{Normal}(0,\sigma^2\mathbb{1}_Y)$, yields ($\lambda=2\sigma^2$)
\begin{align*}
    \hat x 
    &=: \argmin_{x\in X} \|Ax-y^\delta\|^2-\lambda \log[p(x)] \ .
\end{align*}
We now decompose $x$ into  $x_\perp := P_{\mathcal{N}(A)^\perp}(x)$,\, $ and $\, $x_\mathcal{N} := P_{\mathcal{N}(A)}(x)$, where $\mathcal{N}(A)$ denotes the nullspace of $A$ and where $P_{\mathcal{N}(A)}(x)$, resp. $P_{\mathcal{N}(A)^\perp}(x)$, denotes the orthogonal projection onto $\mathcal{N}(A)$, resp. $\mathcal{N}(A)^\perp$.
Setting $\hat x = (x_\mathcal{N}, x_\perp)$ yields
\begin{align*}
    \hat x &= \argmin_{x\in X} \|Ax_\perp-y^\delta\|^2-\lambda \log\ p(x_\mathcal{N}, x_\perp) \\
     &= \argmin_{x\in X} \|Ax_\perp-y^\delta\|^2
        -\lambda \log\ p(x_\perp) - \lambda \log\ p(x_\mathcal{N}|x_\perp).\\
     &= \argmin_{x\in X}\ \rlap{$\overbrace{\phantom{\|Ax_\perp-y^\delta\|^2-\lambda \log\ p(x_\perp)}}^{(I)}$}\|Ax_\perp-y^\delta\|^2
     \underbrace{-\lambda \log\ p(x_\perp) - \lambda \log\ p(x_\mathcal{N}|x_\perp)}_{(II)}.
\end{align*}
The data $y^\delta$ only contains information about $x_\perp$,
which in classical regularization is exploited by restricting any reconstruction to  ${\cal N}(A)^\perp$.

However, if available, $p(x_\mathcal{N}|x_\perp)$ is a measure on how to extend $x_\perp$ with an $x_\perp \in {\cal N}(A)^\perp$ to a suitable  $x = (x_\mathcal{N}, x_\perp)$. The classical regularization of inverse problems uses the trivial extension by zero, i.e., \ $x = (0, x_\perp)$, which is not necessarily optimal. If we accept the interpretation that a network can be a meaningful parametrization of the set of suitable solutions $x$, then $p(x) \equiv 0$ for all $x$ not in the range of the network and optimizing the network will indeed yield a non-trivial completion $x = (x_\mathcal{N}, x_\perp)$. More precisely (I) can be interpreted to be a deep prior on the measurement and (II) to be a deep prior on the nullspace part of the problem.


\section{Deep priors and Tikhonov functionals}
\label{sec:dip_tikhonov}
In this section, we consider the particular network architecture given by unrolled proximal gradient schemes, see Section \ref{sec:unrolled_architecture}. We aim at embedding this approach into the classical regularization theory for inverse problems. For a strict mathematical analysis, we will introduce the notion of an analytic deep prior network, which then allows interpreting the training of the deep prior network as an optimization of a Tikhonov functional. The main result of this section is Theorem \ref{th:order_optimal}, which states that analytic deep priors in combination with a suitable stopping rule are indeed order optimal regularization schemes.
Numerical experiments in Section \ref{sec:numerics} demonstrate that such deep prior approaches lead to smaller reconstruction errors when compared with standard Tikhonov reconstructions. The superiority of this approach can be proved, however, only for the rather unrealistic case, that the solution coincides with a singular function of $A$.

\subsection{Unrolled proximal gradient networks as deep priors for inverse problems}
In this section, we consider linear operators $A$ and aim at rephrasing DIP, i.e., the minimization of (\ref{basicloss}) with respect to $\Theta$,  as a constrained optimization problem. This change of view, i.e.,\ regarding deep inverse priors as an optimization of a simple but constrained functional, rather than networks, opens the way for analytic investigations. We will use an unrolled proximal gradient architecture for the network $\varphi_\Theta (z)$ in \eqref{basicloss}. The starting point for our investigation is the common observation, see  \cite{combettes2005splitting, lista} or Appendix~I, that an unrolled proximal gradient scheme as defined in Section~\ref{sec:unrolled_architecture} approximates a minimizer $x(B)$ of (\ref{eq:constraint_functional}). Assuming that a unique minimizer $x(B)$ exists as well as neglecting the difference between $x(B)$ and the approximation $\varphi_\Theta(z)$ achieved by the unrolled proximal gradient motivates the following definition of analytic deep priors.

\begin{definition}
Let us assume that measured data $y^\delta \in Y$,  a fixed  $\alpha>0$, a convex penalty functional $R:X\to\mathbb{R}$, and a measurement operator $A \in \mathcal{L}(X,Y)$ are given. We consider the minimization problem
\begin{align}
 \min_B F(B)= \min_B\frac{1}{2} \|A x(B) - y^\delta\|^2,
\label{eq:adip}
\end{align}
subject to the constraint
\begin{equation}
x(B) = \argmin_x J_B(x) = \argmin_x\frac{1}{2} \| B x - y^\delta\|^2  + \alpha R(x).
\end{equation}
We assume that for every $B \in \mathcal{L}(X,Y)$ there is a  unique minimizer $x(B)$. We call this constrained minimization problem an analytic deep prior and denote by $x(B)$ the resulting solution to the inverse problems posed by $A$ and $y^\delta$.
\label{def:adip}
\end{definition}

We can also use this technical definition as the starting point of our consideration and retrieve the  neural network architecture  by considering the following approach for solving the minimization problem stated in the above definition. Assuming that $R$ has a proximal operator, we can compute $x(B)$, given $B$, via proximal gradient method. I.e., via the
(for a suitable choice of $\lambda>0$ and an arbitrary $x^0=z\in X$) converging iteration
\begin{equation}
     x^{k+1} = \prox_{\lambda \alpha R} \left(x^k - \lambda B^*(Bx^k-y^\delta)\right).
\end{equation}
Following this iteration for $L$ steps can be seen as the forward pass of a particular architecture of a fully connected feed-forward network with $L$ layers of identical size as described in (\ref{eq:network_1}) and (\ref{eq:network_2}). The affine linear map given by $\Theta=(W,b)$ is the same for all layers. Moreover, the activation function of the network is given by the proximal mapping of $\lambda \alpha R$, the matrix $W$ is given via $I-W = \lambda B^* B$ ($I$ denotes the identity operator), and the bias is determined by $b = \lambda B^* y^\delta$.

From now on we will assume that the difference between $x^L$ and $x(B)$ is negligible, i.e.,
\begin{equation}
    \label{analytic_assumption}
    x^L = x(B).
\end{equation}

\begin{remark}
The task in the DIP approach is to find $\Theta$ (network parameters). Analogously, in the analytic deep prior, we try to find the operator $B$.
\end{remark}

We now examine the analytic deep image prior utilizing the proximal gradient descent approach to compute $x(B)$. Therefore we will focus on the minimization of (\ref{eq:adip})
with respect to $B$ for given data $y^\delta$ by means of gradient descent.

The stationary points are characterized by $\partial F(B)=0$ and gradient descent iterations with stepsize $\eta$ are given by
\begin{equation}
B^{\ell+1} = B^\ell - \eta \partial  F (B^\ell).
\end{equation}
Hence we need to compute the derivative of $F$ with respect to $B$.

\begin{lemma}
Consider an analytic deep prior with the proximal gradient descent approach as described above. We define
\begin{equation}
\psi(x,B) =  \prox_{\lambda \alpha R} \left(x - \lambda B^*(Bx-y^\delta)\right) - x.
\end{equation}
Then 
\begin{equation}
\partial F (B) = \partial x(B)^*A^*(Ax(B) - y^\delta)
\end{equation}
with
\begin{equation}
\partial x(B) = - \psi_x(x(B), B)^{-1} \psi_B(x(B),\, B),
\end{equation}
which leads to the gradient descent
\begin{equation}
B^{\ell+1}= B^\ell - \eta \partial  F (B^\ell).
\end{equation}
\label{lemma:derivative}
\end{lemma}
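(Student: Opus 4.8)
The plan is to regard $x(B)$ not as an explicit function but as implicitly defined through the fixed-point characterization of the proximal gradient iteration, and then to combine the implicit function theorem for the map $\psi$ with a direct chain-rule computation for the outer functional. The first step is to note that under the assumption \eqref{analytic_assumption} the minimizer $x(B)$ of $J_B$ coincides with the fixed point of the proximal gradient map, so that
\[
x(B) = \prox_{\lambda \alpha R}\!\left(x(B) - \lambda B^*(Bx(B) - y^\delta)\right).
\]
By the very definition of $\psi$ this is equivalent to $\psi(x(B), B) = 0$, and since it holds for every admissible $B$, it is an identity in $B$ that we are free to differentiate.

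Next I would differentiate this identity with respect to $B$. Writing $B \mapsto \psi(x(B), B)$ as the composition of $B \mapsto (x(B), B)$ with $\psi$ and applying the chain rule yields
\[
\psi_x(x(B), B)\,\partial x(B) + \psi_B(x(B), B) = 0.
\]
Provided the partial derivative $\psi_x(x(B), B) \in \mathcal{L}(X,X)$ is boundedly invertible, solving for $\partial x(B)$ gives exactly the claimed formula $\partial x(B) = -\psi_x(x(B), B)^{-1}\psi_B(x(B), B)$. This is nothing but the implicit function theorem applied to $\psi$; the bookkeeping of spaces is consistent, since $\partial x(B)$ and $\psi_B$ both map $\mathcal{L}(X,Y) \to X$ while $\psi_x^{-1}$ maps $X \to X$.

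For the derivative of $F$ itself I would simply differentiate $F(B) = \tfrac{1}{2}\|Ax(B) - y^\delta\|^2$ by the chain rule. Evaluating in an arbitrary direction $H \in \mathcal{L}(X,Y)$ gives $\inner{Ax(B) - y^\delta}{A\,\partial x(B)H} = \inner{\partial x(B)^* A^*(Ax(B) - y^\delta)}{H}$, from which $\partial F(B) = \partial x(B)^* A^*(Ax(B) - y^\delta)$ follows by definition of the gradient, and the gradient descent iteration $B^{\ell+1} = B^\ell - \eta\,\partial F(B^\ell)$ is then immediate.

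The main obstacle I expect lies not in the formal differentiation but in justifying its two analytic prerequisites. First, the proximal operator $\prox_{\lambda \alpha R}$ must be (Fréchet) differentiable at the relevant arguments, which is not automatic for a general convex $R$ and has to be linked to the structural assumptions on $R$ — for soft shrinkage, for instance, one has smoothness only away from the kinks. Second, and more delicate, one must ensure that $\psi_x(x(B), B)$ is invertible; since $\psi_x$ is essentially $I$ minus the derivative of the inner proximal-gradient map, this invertibility is intimately tied to the contraction property that makes the proximal gradient iteration converge in the first place, so I would aim to exploit the nonexpansiveness of the prox together with an appropriate choice of the step size $\lambda$ to secure it.
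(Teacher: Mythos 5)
Your proposal follows essentially the same route as the paper's own proof: the chain rule applied to $F(B)=\tfrac{1}{2}\|Ax(B)-y^\delta\|^2$ gives $\partial F(B)=\partial x(B)^*A^*(Ax(B)-y^\delta)$, the fixed-point identity $\psi(x(B),B)=0$ encodes the minimizer characterization, and the implicit function theorem (which your explicit chain-rule differentiation of that identity simply unpacks) yields $\partial x(B)=-\psi_x(x(B),B)^{-1}\psi_B(x(B),B)$. Your closing remarks on the analytic prerequisites --- Fréchet differentiability of $\prox_{\lambda\alpha R}$ and invertibility of $\psi_x(x(B),B)$ --- concern hypotheses the paper silently assumes, so they sharpen rather than depart from its argument.
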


This lemma allows to obtain an explicit description of the gradient descent for $B$, which in turn leads to an iteration of functionals $J_B$ and minimizers $x(B)$. We will now exemplify this derivation for a rather academic example, which however highlights in particular the differences between a classical Tikhonov minimizer, i.e.\ 
\[x(A) = \argmin_x \frac{1}{2} \| A x - y^\delta\|^2  + \frac{\alpha}{2} \Vert x \Vert^2,\]
and the solution of the DIP approach.

\subsubsection{Example}
In this example we examine analytic deep priors for linear inverse problems $A:X \rightarrow Y$, i.e. $A, B \in {\cal L}(X,Y)$, and
\begin{equation}
    R(x)=\frac{1}{2}\| x\|^2.
\end{equation}
The rather abstract characterization of the previous section can be made explicit for this setting. Since $J_B(x)$ is the classical Tikhonov regularization, which can be solved by 
\begin{equation}
    x(B) = (B^*B+\alpha I)^{-1}B^*y^\delta,
\end{equation}
we can rewrite the analytic deep prior reconstruction as $x(B)$, where $B$ is minimizing
\begin{equation}
    F(B) = \frac{1}{2} \| A (B^*B+\alpha I)^{-1}B^*y^\delta - y^\delta\|^2.
    \label{eq:example_functional}
\end{equation}

\begin{lemma}
Following Lemma \ref{lemma:derivative}, assuming $B^0=A$ and computing one step of gradient descent to minimize the functional with respect to $B$, yields
\begin{equation}
    B_1 = A - \eta\partial F(A)
\end{equation}
with
\begin{align}
\partial F (A)  &=\partial x (A)^*A^*(Ax(A) - y^\delta) \\ 
\begin{split}
                &= \alpha AA^*y^\delta ({y^\delta})^*  A  {\left( A^*A + \alpha I  \right)^{-3} }  \\   
                & \ \ \ +\alpha A { \left( A^*A + \alpha I  \right)^{-3} } A^* y^\delta  ({y^\delta})^* A \\
                & \ \ \ -\alpha {y^\delta} ({y^\delta})^*   A{ \left( A^*A + \alpha I  \right)^{-2} }.
\end{split}
\end{align}
\label{lemma:derivative_fb}
\end{lemma}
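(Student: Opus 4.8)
The plan is to reduce everything to Lemma~\ref{lemma:derivative}, which already furnishes $\partial F(B) = \partial x(B)^*A^*(Ax(B)-y^\delta)$; the entire task is then to evaluate this expression at $B=A$ for the quadratic penalty $R(x)=\tfrac12\|x\|^2$. Because the proximal mapping of $\lambda\alpha R$ is linear in this case, the minimizer has the closed form $x(B)=M(B)^{-1}B^*y^\delta$ with $M(B):=B^*B+\alpha I$, so I would differentiate this expression directly rather than through the implicit formula $-\psi_x^{-1}\psi_B$ (the two agree here). First I would record the three ingredients at $B=A$: the minimizer $x(A)=M^{-1}A^*y^\delta$ (abbreviating $M:=A^*A+\alpha I$), the Fréchet derivative $\partial x(A)$, and the vector $A^*(Ax(A)-y^\delta)$ that is fed into the adjoint.

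Computing $\partial x$ is routine operator calculus. Using the resolvent identity $\partial[M(B)^{-1}][H]=-M(B)^{-1}(H^*B+B^*H)M(B)^{-1}$ together with the product rule and $\partial[B^*][H]=H^*$, I obtain at $B=A$
\begin{equation*}
\partial x(A)[H] = -M^{-1}(H^*A+A^*H)\,x(A) + M^{-1}H^*y^\delta .
\end{equation*}
This defines a bounded linear map $\partial x(A)\colon \mathcal{L}(X,Y)\to X$.

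The main work is to transpose this map, i.e.\ to describe $\partial x(A)^*\colon X\to\mathcal{L}(X,Y)$ explicitly. Equipping $\mathcal{L}(X,Y)$ with the Hilbert--Schmidt inner product, I would fix $v\in X$, expand $\langle\partial x(A)[H],v\rangle_X$ term by term, and push every occurrence of $H$ or $H^*$ out of the inner product using the adjoint relations on $X$ and $Y$ together with the rank-one identity $\langle u,Hw\rangle_Y=\langle H,uw^*\rangle_{\mathrm{HS}}$. Collecting the three resulting pieces gives
\begin{equation*}
\partial x(A)^*v = -\big(Ax(A)\big)(M^{-1}v)^* - \big(AM^{-1}v\big)x(A)^* + y^\delta (M^{-1}v)^* .
\end{equation*}
Keeping the placement of $A$, $A^*$ and the powers of $M^{-1}$ straight in this transposition is the \emph{main obstacle}: it is pure bookkeeping, but easy to get wrong in the non-self-adjoint operator setting.

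The final and cleanest step is to substitute $v=A^*(Ax(A)-y^\delta)$. Here the residual collapses: since $A^*A=M-\alpha I$, one has $A^*Ax(A)=(I-\alpha M^{-1})A^*y^\delta = A^*y^\delta-\alpha x(A)$, whence $v=A^*(Ax(A)-y^\delta)=-\alpha\,x(A)$. Plugging $v=-\alpha x(A)$ and $x(A)=M^{-1}A^*y^\delta$ into the three terms above, taking adjoints of the rank-one factors (using that each $M^{-k}$ is self-adjoint), factoring out $\alpha$, and collecting the powers of $(A^*A+\alpha I)^{-1}$ produces exactly the three rank-one operator terms displayed in the statement. The identity $B_1=A-\eta\,\partial F(A)$ is then immediate from the gradient step $B^{\ell+1}=B^\ell-\eta\,\partial F(B^\ell)$ at $\ell=0$ with $B^0=A$. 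The only genuinely non-mechanical ingredient is this residual collapse, which is what makes the symmetric structures $y^\delta(y^\delta)^*$ and $A^*y^\delta(y^\delta)^*A$ emerge.
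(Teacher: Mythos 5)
Your overall route is the same as the paper's own proof: both differentiate the closed form $x(B)=(B^*B+\alpha I)^{-1}B^*y^\delta$ directly, transpose the resulting linear map via the Hilbert--Schmidt pairing and the rank-one identity, and then exploit the collapse $A^*(Ax(A)-y^\delta)=-\alpha x(A)$. Your intermediate formulas are in fact the correct ones: writing $M:=A^*A+\alpha I$, the product rule gives $\partial x(A)[H]=-M^{-1}(H^*A+A^*H)M^{-1}A^*y^\delta+M^{-1}H^*y^\delta$, exactly as you state, and your adjoint $\partial x(A)^*v=-(Ax(A))(M^{-1}v)^*-(AM^{-1}v)x(A)^*+y^\delta(M^{-1}v)^*$ is consistent with it. The genuine gap is your last sentence: substituting $v=-\alpha x(A)$ and ``collecting the powers of $(A^*A+\alpha I)^{-1}$'' does \emph{not} produce the displayed terms. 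It produces
\begin{equation*}
\partial F(A)=\alpha\,AM^{-1}A^*y^\delta(y^\delta)^*AM^{-2}+\alpha\,AM^{-2}A^*y^\delta(y^\delta)^*AM^{-1}-\alpha\,y^\delta(y^\delta)^*AM^{-2},
\end{equation*}
whereas the lemma asserts
\begin{equation*}
\alpha\,AA^*y^\delta(y^\delta)^*AM^{-3}+\alpha\,AM^{-3}A^*y^\delta(y^\delta)^*A-\alpha\,y^\delta(y^\delta)^*AM^{-2}.
\end{equation*}
Passing from the first expression to the second requires moving a factor $M^{-1}$ through $A^*y^\delta(y^\delta)^*A$, which is legitimate only when $A^*y^\delta(y^\delta)^*A$ commutes with $A^*A$ (in particular when $y^\delta(y^\delta)^*$ commutes with $AA^*$). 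For generic $y^\delta$ this fails: take $X=Y=\mathbb{R}^2$, $A=\left(\begin{smallmatrix}1&1\\0&1\end{smallmatrix}\right)$, $\alpha=1$, $y^\delta=e_1$; a finite-difference computation of $\partial F(A)$ gives $\tfrac{1}{25}\left(\begin{smallmatrix}0&1\\1&0\end{smallmatrix}\right)$, which agrees with your expression, while the lemma's display evaluates to $\tfrac{1}{25}\left(\begin{smallmatrix}3&0\\2&-2\end{smallmatrix}\right)$.

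The source of the discrepancy is instructive: the paper's own proof commits the analogous ordering slip one step earlier, writing $\partial x(B)(\delta B)=-(B^*B+\alpha I)^{-2}(\delta B^*B+B^*\delta B)B^*y^\delta+(B^*B+\alpha I)^{-1}\delta B^*y^\delta$, i.e.\ pulling both resolvent factors to the left of the non-commuting middle factor; its adjoint and the stated formula are then consistent with that slip, and the implicit-function-theorem route of Lemma \ref{lemma:derivative} confirms your version, not the paper's. So, as a blind proof of the \emph{stated} lemma, your argument has a hole --- the final ``collecting'' step silently commutes non-commuting operators --- and you should either add the commutation hypothesis (this is exactly the regime used immediately afterwards, $y^\delta=(\sigma+\delta)v$ along a singular function, so Lemma \ref{lemma:example_l2} and all downstream results are unaffected) or keep your correct expression and note that the lemma's display should be amended accordingly.
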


This expression nicely collapses if ${y^\delta} ({y^\delta})^* $ commutes with $AA^*$. For illustration we assume the rather unrealistic case that $x^+=u$, where $u$ is a singular function for $A$ with singular value $\sigma$. The dual singular function is denoted by $v$, i.e.\ $Au=\sigma v$ and $A^* v= \sigma u$ and we further assume, that the measurement noise in $y^\delta$ is in the direction of this singular function, i.e., $y^\delta = (\sigma + \delta) v$, see Figure \ref{fig:example}. In this case, the problem is indeed one-dimensional and we obtain an iteration restricted to the span of $u$, resp. the span of $v$.

\begin{lemma}
The setting described above yields the following gradient step for the functional in (\ref{eq:example_functional}):
\begin{equation}
\label{eq:B_update}
B^{\ell+1} = B^\ell -  c_\ell v u^*
\end{equation}
with
\begin{equation*}
    c_\ell=c(\alpha, \delta, \sigma, \eta)=\eta   \sigma (\sigma + \delta)^2 (\alpha + \beta_\ell^2  -\sigma \beta_\ell) \frac{\beta_\ell^2 - \alpha}{(\beta_\ell^2 + \alpha)^3},
\end{equation*}
and the iteration (\ref{eq:B_update}) in-turn results in the sequence $x(B^{\ell})$ with the unique attractive stationary point

\begin{equation}
    x = 
    \begin{cases}
      \frac{1}{2\sqrt{\alpha}}(\sigma + \delta) u, & \sigma < 2 \sqrt{\alpha}\\
      \frac{1}{\sigma}(\sigma + \delta) u, & \text{otherwise.}
    \end{cases}
    \label{eq:example_result}
\end{equation}

\label{lemma:example_l2}
\end{lemma}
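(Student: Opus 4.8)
The plan is to exploit the one-dimensional structure induced by the assumption $x^+ = u$ and $y^\delta = (\sigma+\delta)v$. First I would show that the entire iteration stays within the rank-one subspace spanned by $vu^*$. Starting from $B^0 = A$, Lemma~\ref{lemma:derivative_fb} gives $\partial F(A)$ as a sum of three operators, each built from $y^\delta(y^\delta)^*$, $AA^*$, and resolvents $(A^*A+\alpha I)^{-k}$. Since $y^\delta = (\sigma+\delta)v$ and $v$ is the left singular function with $Au = \sigma v$, $A^*v = \sigma u$, every one of these operators acts nontrivially only on $\mathrm{span}(u)$ and maps into $\mathrm{span}(v)$. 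Concretely I would verify that $A^*y^\delta = \sigma(\sigma+\delta)u$, that $(A^*A+\alpha I)^{-1}u = (\sigma^2+\alpha)^{-1}u$, and that $Au = \sigma v$, so each term collapses to a scalar multiple of $vu^*$. Writing $B^\ell = \beta_\ell\, vu^* + (\text{part orthogonal to the }u,v\text{ directions})$ and checking that the orthogonal part is never excited by the gradient step, I reduce the operator recursion $B^{\ell+1}=B^\ell-\eta\,\partial F(B^\ell)$ to a scalar recursion for $\beta_\ell$.

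\textbf{Deriving the scalar update and the constant $c_\ell$.}
Next I would substitute the ansatz $B^\ell = \beta_\ell\, vu^*$ into the general gradient formula of Lemma~\ref{lemma:derivative} (or directly into the three-term expression of Lemma~\ref{lemma:derivative_fb} with $A$ replaced by $B^\ell$). On the rank-one subspace, $B^\ell$ behaves like multiplication by $\beta_\ell$ from $\mathrm{span}(u)$ to $\mathrm{span}(v)$, so $(B^{\ell*}B^\ell+\alpha I)^{-1}$ acts as the scalar $(\beta_\ell^2+\alpha)^{-1}$ on $\mathrm{span}(u)$, and the corresponding minimizer is $x(B^\ell) = \frac{\beta_\ell(\sigma+\delta)}{\beta_\ell^2+\alpha}u$. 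Plugging these scalars into the derivative and carefully collecting the three contributions should yield exactly
\[
c_\ell = \eta\,\sigma(\sigma+\delta)^2(\alpha+\beta_\ell^2-\sigma\beta_\ell)\frac{\beta_\ell^2-\alpha}{(\beta_\ell^2+\alpha)^3}.
\]
This is the routine-but-delicate bookkeeping step; the main obstacle here is matching the powers of the resolvent and the precise algebraic combination $\alpha+\beta_\ell^2-\sigma\beta_\ell$, which arises from the interplay between the $A$ appearing in $\partial F$ and the $B^\ell$ inside $x(B^\ell)$.

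\textbf{Analyzing the fixed points and their stability.}
Once the scalar recursion $\beta_{\ell+1} = \beta_\ell - c_\ell$ is established, I would locate its stationary points by solving $c_\ell = 0$. This requires either $\beta_\ell^2 = \alpha$ or $\alpha+\beta_\ell^2-\sigma\beta_\ell = 0$; the latter quadratic has real roots precisely when $\sigma \ge 2\sqrt{\alpha}$, which is exactly the case split appearing in the claimed limit. I would then evaluate $x(B^\ell) = \frac{\beta_\ell(\sigma+\delta)}{\beta_\ell^2+\alpha}u$ at each candidate fixed point: at $\beta = \sqrt{\alpha}$ one gets $\frac{1}{2\sqrt{\alpha}}(\sigma+\delta)u$, while at the root $\beta$ of the quadratic (where $\beta^2+\alpha = \sigma\beta$) one gets $\frac{1}{\sigma}(\sigma+\delta)u$. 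These match the two branches of \eqref{eq:example_result}.

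\textbf{Establishing which fixed point is attractive.}
The remaining and most conceptually important step is to show that the claimed stationary point is the \emph{unique attractive} one. Here I would treat $g(\beta) := \beta - c(\beta)$ as a scalar map and analyze its fixed points via $|g'(\beta)|<1$. Because the $x$-value is the quantity of interest rather than $\beta$ itself, I expect the cleaner route to be studying the induced dynamics on $x(B^\ell)$ directly, or equivalently examining the sign of $c_\ell$ in the relevant intervals of $\beta$ to determine the direction of flow and identify which root the sequence approaches for generic initialization $\beta_0 = \sigma$. The hard part will be handling the regime $\sigma < 2\sqrt{\alpha}$, where the quadratic factor never vanishes and so $\beta^2=\alpha$ is the only available stationary value, versus $\sigma \ge 2\sqrt{\alpha}$, where one must argue that the quadratic root dominates and that $\beta^2 = \alpha$ becomes repelling; verifying the stability interchange at the threshold $\sigma = 2\sqrt{\alpha}$ is where the analysis is most likely to require care.
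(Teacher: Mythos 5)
Your proposal is correct and follows essentially the same route as the paper's proof: reduction of the operator iteration to a scalar recursion $\beta_{\ell+1}=\beta_\ell-c(\beta_\ell)$ for the singular value of $u$, identification of the roots $\pm\sqrt{\alpha}$ and $\frac{\sigma}{2}\pm\sqrt{\frac{\sigma^2}{4}-\alpha}$ of $c$, stability classification via the sign of $\partial_\beta c$ (equivalent, for small stepsize $\eta$, to your $|g'(\beta)|<1$ criterion since $g'=1-c'$), and the observation that both roots of the quadratic yield the same reconstruction $\frac{1}{\sigma}(\sigma+\delta)u$, giving uniqueness of the attractive stationary point of $x(B^\ell)$. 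The only point needing correction is your parenthetical alternative of substituting ``$A$ replaced by $B^\ell$'' into the three-term expression of Lemma~\ref{lemma:derivative_fb}: that expression is specific to the evaluation point $B=A$, and replacing every occurrence of $A$ would erase exactly the $A$-versus-$B^\ell$ interplay that you yourself correctly identify as the origin of the factor $\alpha+\beta_\ell^2-\sigma\beta_\ell$; one must instead use the general formula of Lemma~\ref{lemma:derivative}, i.e., $\partial F(B^\ell)=[\partial x(B^\ell)]^*A^*(Ax(B^\ell)-y^\delta)$, with the adjoint $[\partial x(B^\ell)]^*$ evaluated at $B^\ell$.
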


For comparison, the classical Tikhonov regularization would yield $\frac{\sigma}{\sigma^2 + \alpha}(\sigma + \delta ) u$. This is depicted in Figure~\ref{fig:example_reconstructions}.
 
\begin{figure}
    \centering
    \includegraphics[width=0.5\textwidth]{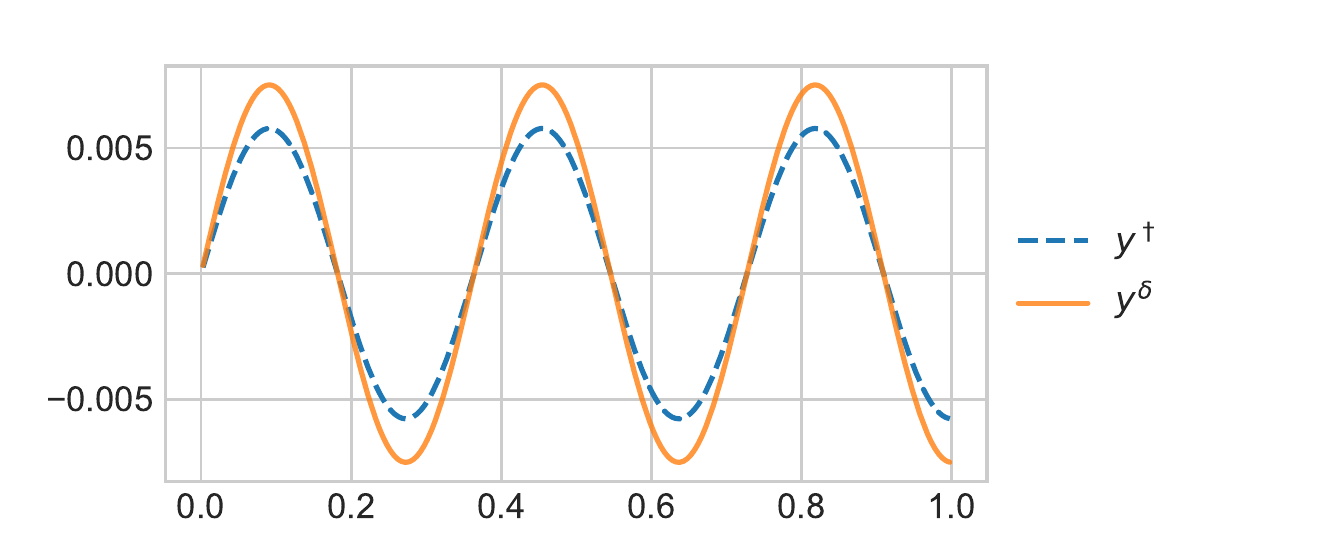}
    \caption{Example of $y^\delta = (\sigma + \delta) v$ where $v$ is a singular function of $A$ (integral operator).}
    \label{fig:example}
\end{figure}

\begin{figure}
    \centering
    \includegraphics[width=0.5\textwidth]{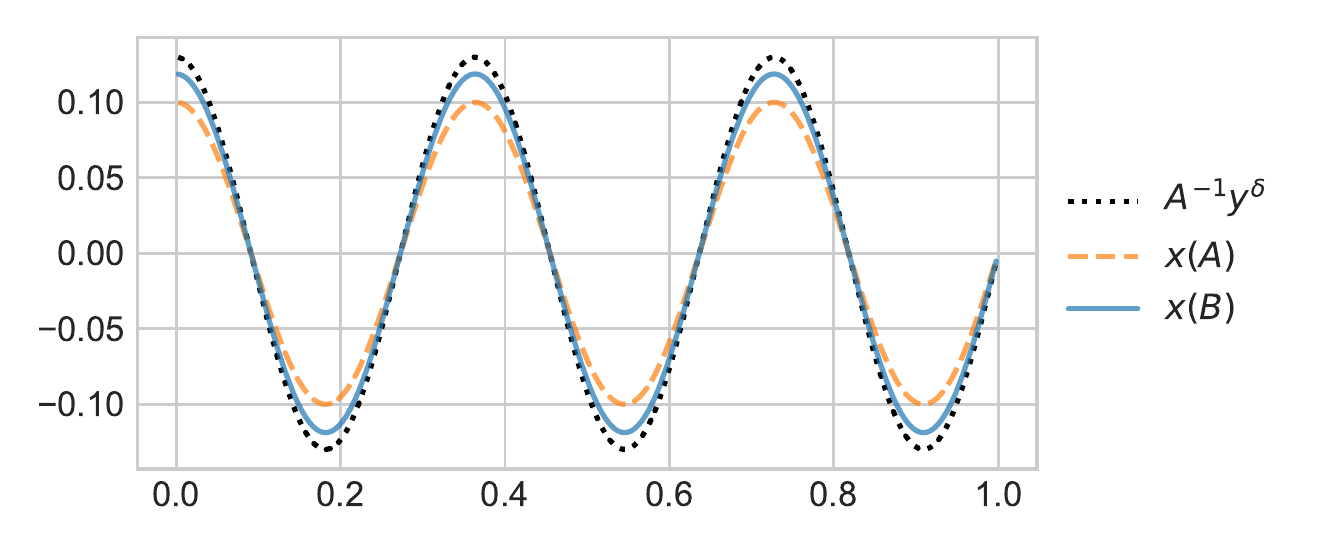}
    \caption{Comparisson of the Tikhonov reconstruction (orange broken line), the result obtained in (\ref{eq:example_result}) (blue continuous line) and the direct inverse. In this example we considered $\alpha=10^{-3}$.}
    \label{fig:example_reconstructions}
\end{figure}

\subsubsection{Constrained system of singular functions}
In the previous example we showed that if we do gradient descent starting from $B_0=A$ and assume the rather simple case $y^\delta  = (\sigma + \delta) v$, we obtain the iteration $B^{\ell+1} = B^\ell -  c_\ell v u^*$, i.e. $B^{\ell+1}$ has the same singular functions as $A$ and only one of the singular values is different.

We now analyze the optimization from a different perspective. Namely, we focus on finding directly a minimizer of (\ref{eq:adip}) for a general $y^\delta \in Y$, however, we restrict $B$ to be an operator such that $B^\ast B$ commutes with $A^\ast A$, i.e. $A$ and $B$ share a common system of singular functions. Hence, $B$ has the following representation.
\begin{equation}
B=\sum_i \beta_i v_i u_i^*, \quad  \ \beta_i \in \mathbb{R}_+\cup \{0\},
\end{equation}
where $\{u_i, \sigma_i, v_i\}$ is the singular value decomposition of $A$. That means we restrict the problem to finding optimal singular values $\beta_i$ for $B$. In this case we show that a global minimizer exists and that it has interesting properties.

\begin{theorem} For any $y^\delta \in Y$ there exist a global minimizer (in the constrained singular functions setting) of (\ref{eq:adip}) given by $B_\alpha=\sum \beta_i^\alpha v_i u_i^*$ with
\begin{equation}
\beta_i^\alpha(\sigma) = 
    \begin{cases}
       \frac{\sigma_i}{2} + \sqrt{\frac{\sigma_i^2}{4} - \alpha} & \quad \sigma \geq 2\sqrt{\alpha}\\
       \sqrt{\alpha} & \quad \sigma < 2\sqrt{\alpha} \\ 
     \end{cases}.
\end{equation}
\label{th:minimizer}
\end{theorem}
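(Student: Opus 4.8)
The plan is to exploit the shared singular system to collapse the operator-valued minimization of $F$ into a family of decoupled scalar problems. First I would write $A=\sum_i\sigma_i v_iu_i^*$ and the admissible $B=\sum_i\beta_i v_iu_i^*$, so that $B^*B=\sum_i\beta_i^2 u_iu_i^*$ and $B^*B+\alpha I$ is diagonal in the $u_i$ basis. Expanding $y^\delta=\sum_i y_i v_i + y^\delta_\perp$ with $y_i=\langle v_i,y^\delta\rangle$ and $y^\delta_\perp\perp\overline{\mathcal{R}(A)}$, the closed form $x(B)=(B^*B+\alpha I)^{-1}B^*y^\delta$ becomes $x(B)=\sum_i\tfrac{\beta_i y_i}{\beta_i^2+\alpha}u_i$. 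Applying $A$ and subtracting $y^\delta$ then yields the separated expression
\[
F(B)=\frac{1}{2}\sum_i y_i^2\left(\frac{\sigma_i\beta_i}{\beta_i^2+\alpha}-1\right)^2+\frac{1}{2}\|y^\delta_\perp\|^2 .
\]

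Because the sum decouples and $\|y^\delta_\perp\|^2$ is a constant independent of $B$, I would minimize each summand separately over $\beta_i\ge 0$. For a mode with $y_i\neq 0$ this amounts to bringing the rational function $g_i(\beta)=\tfrac{\sigma_i\beta}{\beta^2+\alpha}$ as close to $1$ as possible. A short derivative computation shows $g_i$ increases on $[0,\sqrt{\alpha}]$, decreases on $[\sqrt{\alpha},\infty)$, attains its maximum $g_i(\sqrt{\alpha})=\tfrac{\sigma_i}{2\sqrt{\alpha}}$, and vanishes at $0$ and at infinity. Hence the attainable range of $g_i$ on $[0,\infty)$ is $[0,\tfrac{\sigma_i}{2\sqrt{\alpha}}]$, and the entire case split is governed by whether $1$ lies in this interval, i.e.\ by the sign of $\sigma_i-2\sqrt{\alpha}$.

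The two cases then follow. If $\sigma_i\ge 2\sqrt{\alpha}$ the equation $g_i(\beta)=1$, equivalently $\beta^2-\sigma_i\beta+\alpha=0$, has the two positive roots $\tfrac{\sigma_i}{2}\pm\sqrt{\tfrac{\sigma_i^2}{4}-\alpha}$, at which the summand vanishes and thus attains its global minimum; selecting the larger root gives the stated $\beta_i^\alpha$ and matches the attractive stationary point found for the one-dimensional example in Lemma~\ref{lemma:example_l2}. If $\sigma_i<2\sqrt{\alpha}$ then $g_i(\beta)<1$ throughout, so the summand is strictly decreasing in $g_i$ and is minimized exactly where $g_i$ is maximal, namely at $\beta_i^\alpha=\sqrt{\alpha}$. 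For degenerate modes $y_i=0$ the summand vanishes identically, so any choice --- in particular the stated one --- is optimal. Combining the per-mode minimizers produces a global minimizer of $F$.

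The step requiring genuine care is existence in the compact/infinite-dimensional setting: I must verify that the collection $\{\beta_i^\alpha\}$ is bounded so that $B_\alpha=\sum_i\beta_i^\alpha v_iu_i^*$ is a well-defined element of $\mathcal{L}(X,Y)$. Here the structure helps. In the first case $\tfrac{\sigma_i}{2}+\sqrt{\tfrac{\sigma_i^2}{4}-\alpha}\le\sigma_i\le\|A\|$, while in the second case $\beta_i^\alpha=\sqrt{\alpha}$, so $\sup_i\beta_i^\alpha\le\max(\|A\|,\sqrt{\alpha})<\infty$; for compact $A$ only finitely many $\sigma_i$ exceed $2\sqrt{\alpha}$, and all remaining modes are assigned the single value $\sqrt{\alpha}$. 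The only remaining subtlety is to note the non-uniqueness in the first case: both roots give the same value of $F$ and indeed the same reconstruction $x(B_\alpha)=\tfrac{y_i}{\sigma_i}u_i$, which is precisely what justifies the freedom in the statement.
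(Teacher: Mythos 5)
Your proof follows essentially the same route as the paper's: diagonalize $B$ in the singular system of $A$, reduce $F$ to a decoupled sum over the modes, and minimize each summand by either solving $\tfrac{\sigma_i\beta_i}{\beta_i^2+\alpha}=1$ exactly when $\sigma_i\geq 2\sqrt{\alpha}$ or, when that is impossible, maximizing the ratio at $\beta_i=\sqrt{\alpha}$. The extra care you take --- splitting off the component $y^\delta_\perp$ orthogonal to $\overline{\mathcal{R}(A)}$, verifying that $\sup_i \beta_i^\alpha$ is bounded so that $B_\alpha\in\mathcal{L}(X,Y)$, and treating the degenerate modes $y_i=0$ --- tightens details the paper leaves implicit, but the underlying argument is identical.
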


\begin{remark}
The singular values obtained in Theorem \ref{th:minimizer} match the ones obtained in the previous section for general $B$ but simple $y^\delta = (\sigma + \delta) v$.
\end{remark}

\begin{remark}
The minimizer from Theorem \ref{th:minimizer} does not depend on $y^\delta$, i.e. $\forall: y^\delta \in Y$ it holds that $B_\alpha$ is a minimizer of (\ref{eq:adip}). The solution to the inverse problem does still depend on $y^\delta$ since

\begin{equation}
x(B_\alpha) = \argmin_x\frac{1}{2} \| B_\alpha x - y^\delta\|^2  + \alpha R(x).
\end{equation}

\end{remark}

\begin{remark}
In the original DIP approach, some of the parameters of the network may be similar for different $y^\delta$, for example, the parameters of the first layers of the encoder part of the UNet. Other parameters may strongly depend on $y^\delta$. In this particular case of the analytic deep prior (constrained system of singular functions) we have a explicit separation of which parameters ($b = \lambda B^* y^\delta$) depend on $y^\delta$  and which do not  ($W =I - \lambda B^* B$).
\end{remark}

From now on we consider the notation $x(B,\,y^\delta)$ to incorporate the dependency of $x(B)$ on $y^\delta$.
Following the classical  filter theory for order optimal regularization schemes, \cite{louis,rieder,engl}, we obtain the following theorem.  
\begin{theorem}
The pseudo inverse $K_\alpha: Y \to X$ defined as
\begin{equation}
K_\alpha(y^\delta) := x(B_\alpha,\, y^\delta)
\end{equation} 
is an order optimal regularization method given by the filter functions
\begin{equation}
    F_\alpha(\sigma) = 
    \begin{cases}
       1 & \quad \sigma \geq 2\sqrt{\alpha}\\
       \frac{\sigma}{2\sqrt{\alpha}} & \quad \sigma < 2\sqrt{\alpha} \\ 
     \end{cases}.
     \label{eq:soft_filter}
\end{equation}
\label{th:order_optimal}
\end{theorem}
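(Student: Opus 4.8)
The plan is to recognize the reconstruction map $K_\alpha$ as a spectral filter method and then invoke the classical filter theory of order optimal regularization \cite{louis,rieder,engl}. The first step is to insert the minimizer $B_\alpha=\sum_i\beta_i^\alpha v_i u_i^*$ from Theorem~\ref{th:minimizer} into the Tikhonov solution formula $x(B_\alpha,y^\delta)=(B_\alpha^*B_\alpha+\alpha I)^{-1}B_\alpha^* y^\delta$. Since $B_\alpha$ shares the singular system of $A$, everything diagonalizes and one obtains
\[
K_\alpha(y^\delta)=\sum_i\frac{\beta_i^\alpha}{(\beta_i^\alpha)^2+\alpha}\,\langle y^\delta,v_i\rangle\,u_i ,
\]
so that $K_\alpha$ is a filter method with $F_\alpha(\sigma)=\sigma\beta^\alpha(\sigma)/\bigl((\beta^\alpha(\sigma))^2+\alpha\bigr)$, where $\beta^\alpha(\sigma)$ is the $B_\alpha$-singular value attached to $\sigma$. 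The key algebraic observation is that for $\sigma\ge 2\sqrt{\alpha}$ the value $\beta^\alpha=\tfrac{\sigma}{2}+\sqrt{\tfrac{\sigma^2}{4}-\alpha}$ is exactly a root of $\beta^2-\sigma\beta+\alpha=0$, hence $(\beta^\alpha)^2+\alpha=\sigma\beta^\alpha$ and $F_\alpha(\sigma)=1$; while for $\sigma<2\sqrt{\alpha}$ one has $\beta^\alpha=\sqrt{\alpha}$, so $(\beta^\alpha)^2+\alpha=2\alpha$ and $F_\alpha(\sigma)=\sigma/(2\sqrt{\alpha})$. This reproduces exactly the claimed filter \eqref{eq:soft_filter}.

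Next I would verify that $\{F_\alpha\}$ satisfies the three standard hypotheses of a regularizing filter. Boundedness is immediate since $0\le F_\alpha(\sigma)\le 1$ everywhere; pointwise convergence $F_\alpha(\sigma)\to 1$ as $\alpha\to 0$ holds for each fixed $\sigma>0$ because eventually $2\sqrt{\alpha}<\sigma$; and the stability (noise amplification) bound is controlled uniformly,
\[
\sup_{\sigma>0}\frac{F_\alpha(\sigma)}{\sigma}=\frac{1}{2\sqrt{\alpha}},
\]
giving $\|K_\alpha(y^\delta-y)\|\le \delta/(2\sqrt{\alpha})$. I would also note that $F_\alpha$ is continuous at the threshold $\sigma=2\sqrt{\alpha}$, so no mismatch between the two branches arises; it is simply a smoothed (linearly ramped) variant of the truncated SVD filter.

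For order optimality I would estimate the bias through the residual filter $1-F_\alpha$, which vanishes for $\sigma\ge 2\sqrt{\alpha}$ and is bounded by $1$ on $(0,2\sqrt{\alpha})$, so that for every $\nu>0$
\[
\sup_{\sigma>0}\sigma^{2\nu}\,\bigl|1-F_\alpha(\sigma)\bigr|\le (2\sqrt{\alpha})^{2\nu}=4^{\nu}\alpha^{\nu}.
\]
Under a Hölder source condition $x^\dagger=(A^*A)^\nu w$ with $\|w\|\le\rho$ this yields a bias bound $\|K_\alpha y-x^\dagger\|\le 4^{\nu}\alpha^{\nu}\rho$, and balancing it against the noise term in $4^{\nu}\alpha^{\nu}\rho+\delta/(2\sqrt{\alpha})$ with $\alpha\sim\delta^{2/(2\nu+1)}$ produces the optimal rate $O(\delta^{2\nu/(2\nu+1)})$. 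Because the residual estimate holds for all $\nu$, the method has infinite qualification and is order optimal for every such source class, and the theorem follows from the standard filter-based order optimality result in \cite{louis,rieder,engl}. The spectral computation itself is painless thanks to the quadratic-root identity, so I expect no genuine obstacle; the only real care needed is the bookkeeping of constants and the precise matching of all hypotheses to the classical order optimality theorem being cited.
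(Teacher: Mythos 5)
Your proposal is correct and takes essentially the same route as the paper: both reduce the theorem to checking the classical spectral-filter conditions (uniform boundedness $|F_\alpha|\le 1$, the stability bound $\sup_\sigma F_\alpha(\sigma)/\sigma \le \tfrac{1}{2}\alpha^{-1/2}$, and the bias/qualification bound valid for every $\nu>0$) and then invoking the standard order-optimality theory of \cite{louis,rieder,engl}; your $\sigma^{2\nu}$ bound with source condition $x^\dagger=(A^*A)^\nu w$ is just the paper's condition $\sup_\sigma|1-F_\alpha(\sigma)|\sigma^{\nu}\le 2^\nu\alpha^{\nu/2}$ under a renaming of $\nu$. If anything, your write-up is slightly more complete, since you also derive the filter representation $F_\alpha(\sigma)=\sigma\beta^\alpha(\sigma)/((\beta^\alpha(\sigma))^2+\alpha)$ from $B_\alpha$ via the root identity $(\beta^\alpha)^2+\alpha=\sigma\beta^\alpha$ and carry out the explicit bias--variance balancing, two steps the paper's proof leaves implicit.
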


The regularized pseudoinverse $K_\alpha$ is quite similar to the Truncated Singular Value Decomposition (TSVD) but is a softer version because it does not have a jump (see Fig. \ref{fig:filters}). We call this method \textit{Soft TSVD}. 

The disadvantage of Tikhonov, in this case, is that it damps all singular values, and the disadvantage of TSVD is that it throws away all the information related to small singular values. On the other hand, the Soft TSVD does not damp the higher singular values (similar to TSVD) and does not throw away the information related to smaller singular values but does damp it (similar to Tikhonov). For a comparison of the filter functions, see Table~\ref{table:filter_functions}. Moreover, what is interesting is how this method comes out from Def. \ref{def:adip}, which is stated in terms of the Tikhonov pseudoinverse, and that the optimal singular values do not depend on $y^\delta$.

\begin{figure}[ht]
    \centering
    \includegraphics[width=0.37\textwidth]{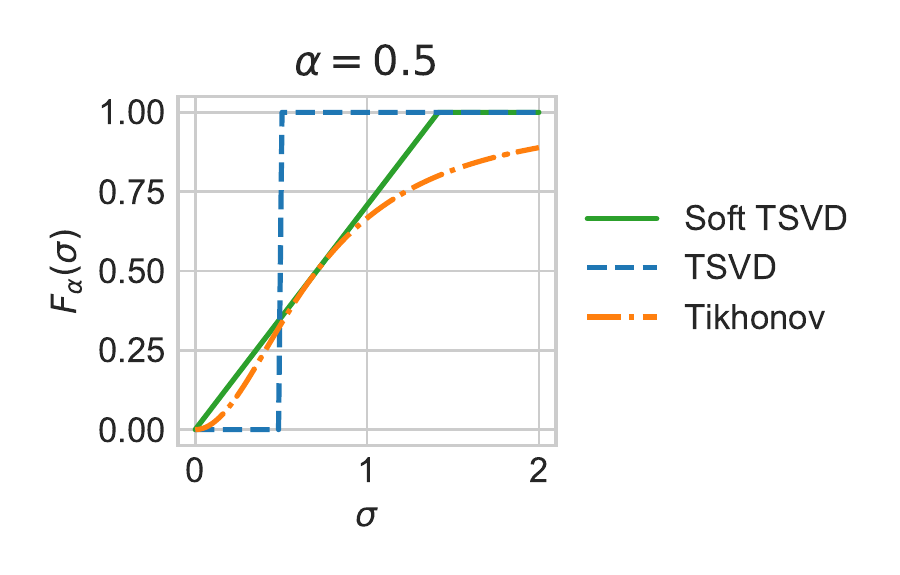}    
    \caption{Filter response of TSVD, Tikhonov and the Soft TSVD.}
    \label{fig:filters}
\end{figure}

At this point the relation to the original DIP approach becomes more abstract. We considered a simplified network architecture where all layers share the same weights that comes from an iterative algorithm for solving inverse problems. That means, we let the solution to the original inverse problem be the solution of another problem with different operator $B$. The DIP approach in this case is transformed to finding an optimal $B$ and allows us to do the analysis in the functional analysis setting. What we learn from the previous results is that we can establish interesting connections between the DIP approach and the classical Inverse Problems theory. This is important because it shows that deep inverse priors can be used to solve really ill-posed inverse problems.

\begin{remark}
\label{rk:dip_difference}
In the original DIP the input $z$ to the network is chosen arbitrarily and is of minor importance. However, once the weights have been trained for a given $y^\delta$, $z$ cannot be changed because it would affect the output of the network, i.e. it would change the obtained reconstruction. In the analytic deep prior the input to the unrolled proximal gradient method is completely irrelevant (assuming an infinite number of layers). After finding the ``weights" $B$ a different input will still produce the same solution $\hat{x} = x(B) = \varphi_\Theta(z)$.
\end{remark}

Remark \ref{rk:dip_difference} tells us that there is still a gap between the original DIP and the analytic one. This was expected because of the obvious trivialization of the network architecture but serves as motivation for further research.

\begin{table}
\centering
\begin{tabular}{llll} 
    \hline 
    Method  & $F_\alpha(\sigma)$ & $\gamma$ & $\nu$ \\
    \hline
\\
    Tikhonov & $\frac{\sigma^2}{\sigma^2+\alpha}$ & $1/2$ & $2 > \nu > 0$\\

\\

    TSVD & $\begin{cases}
       1 & \ \sigma \geq \alpha\\
       0 & \ \sigma < \alpha \\ 
     \end{cases}$ & $1$ & $\nu > 0$ \\

\\
    
    Soft TSVD & $\begin{cases}
       1 & \ \sigma \geq 2\sqrt{\alpha}\\
       \frac{\sigma}{2\sqrt{\alpha}} & \ \sigma < 2\sqrt{\alpha} \\ 
     \end{cases}$ & $1/2$ & $\nu > 0$ \\
\\
\hline

\end{tabular}
\caption{Values of $\nu$ for which TSVD, Tikhonov and the Soft TSVD are order optimal. For more details see the Proof of Theorem \ref{th:order_optimal} in Appendix II.}
\label{table:filter_functions}
\end{table}

\subsection{Numerical experiments}
\label{sec:numerics}

We now use the analytic deep inverse prior approach for solving an inverse problem with the following integration operator $A:~L^2\left(\left[0,1\right]\right)~\rightarrow ~L^2\left(\left[0,1\right]\right)$
\begin{equation}
\label{integration_operator}
\left(Ax\right)(t) = \int_0^{t}x(s)\, \text{d}s.
\end{equation}
$A$ is a linear and compact operator, hence the inverse problem is ill-posed. Let $A_n\in \mathbb{R}^{n \times n}$ be a discretization of $A$ and $x^\dagger \in \mathbb{R}^n$ to be one of its discretized singular vectors $u$. We set the noisy data ${y^\delta = A_n x^\dagger + \delta\tau}$ with ${\tau \sim \mbox{Normal}(0,\mathbb{1}_n)}$, see Figure~\ref{fig:first_example}. A more general example, i.e where $x^\dagger$ is not restricted to be a singular function, is also included (Fig. \ref{fig:extra_example}).

We aim at recovering $x^\dagger$ from $y^\delta$ considering the setting established in Def.~\ref{def:adip} for ${R(\cdot)=\frac{1}{2}\|\cdot\|^2}$. That means that the solution $x$ is para\-metrized by the operator $B$. Solving the inverse problem is now equivalent to finding optimal $B$ that minimizes the loss function (\ref{basicloss}) for the single data point $(z, y^\delta)$. 

To find such a $B$, we go back to the DIP and the neural network approach. We write $x(B)$ as the output of the network $\varphi_\Theta$ defined in \eqref{eq:network_1} with some randomly initialized input $z$. We optimize with respect to $B$, which is a matrix in the discretized setting, and obtain a minimizer $B_{\text{opt}}$ of (\ref{basicloss}). For more details, please refer to Appendix~III.

\begin{figure}
\centering
  \includegraphics[width=0.5\textwidth]{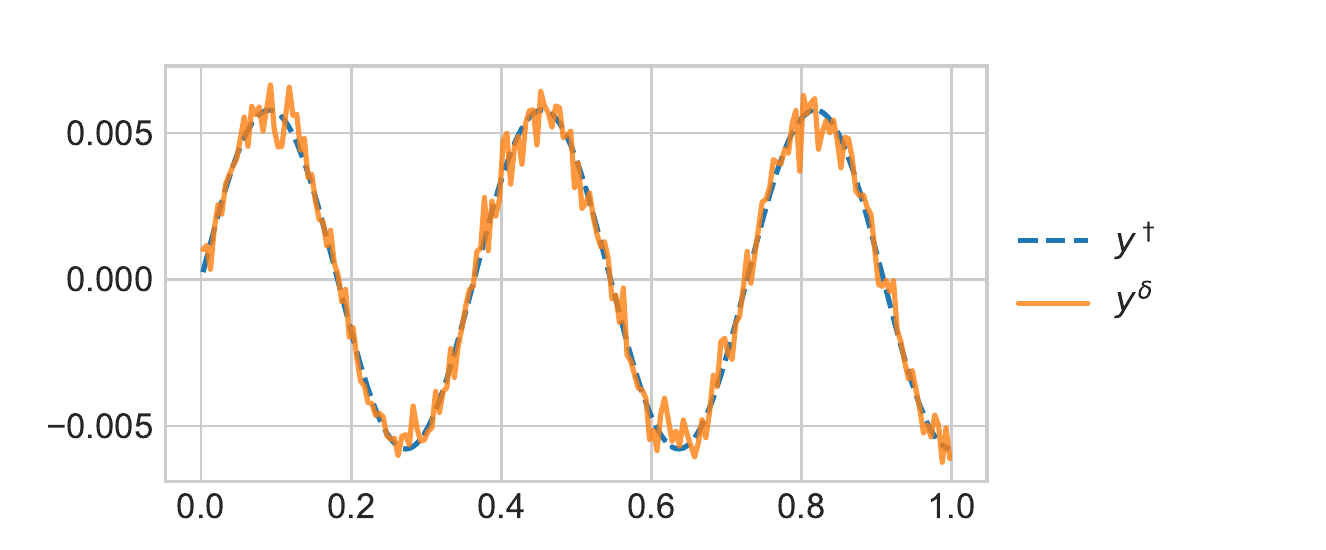}
  \caption{Example of $y^\delta$ for $x^\dagger = u$ (singular function) with a SNR of $17.06\,\text{db}$.}
  \label{fig:first_example}
\end{figure}

\begin{figure}[ht]
  \includegraphics[width=0.5\textwidth]{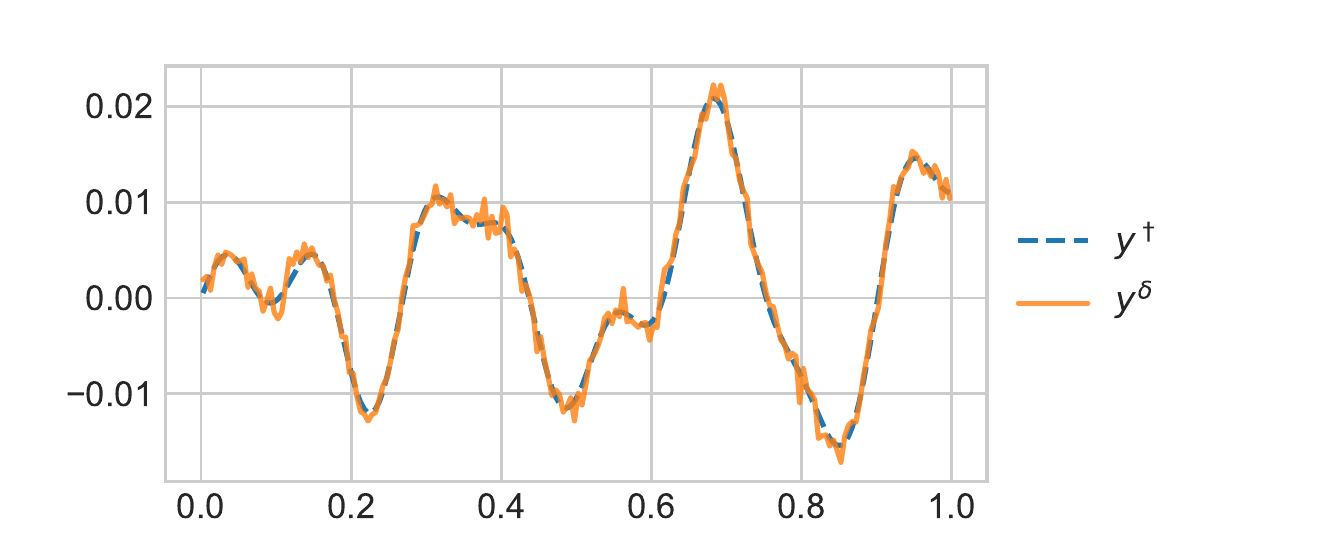}
  \caption{Example of a more general $y^\delta$ with a SNR of $18.97\,\text{db}$.}
  \label{fig:extra_example}
\end{figure}

In Figure~\ref{fig:reconstructions} we show some reconstruction results. The first plot of each row contains the true solution $x^\dagger,$ the standard Ti\-kho\-nov solution $x(A)$ and the reconstruction obtained with the analytic deep inverse approach $x(B_{\text{opt}})$ after $B$ converged. For each case we provide additional plots depicting:
\begin{itemize}
    \item The true error of the network's output $x(B)$ after each update of $B$ in a logarithmic scale. 
    \item The squared Frobenius norm of $B_k-B_{k+1}$ after each update of $B$.
    \item The matrix $B_{\text{opt}}$.
\end{itemize}

For all choices of $\alpha$ the training of $B$ converges to a matrix $B_{\text{opt}}$, such that $x(B_{\text{opt}})$ has a smaller true error than $x(A)$. In the third plot of each row, one can check that $B$ indeed converges to some matrix $B_{\text{opt}}$, which is shown in the last plot. The networks were trained using gradient descent with $0.05$ as learning rate.

The theoretical findings of the previous subsections allow us to compute, either the exact update \eqref{eq:B_update} for $B$ in the rather unrealistic case that $y^\delta = (\sigma + \delta) v$ , or the exact solution $x(B_\alpha, y^\delta)$ if we restrict $B$ to have the same system of singular functions as $A$ (Theorem \ref{th:minimizer}). In the numerical experiments we do not consider any of these restrictions and therefore we cannot directly apply our theoretical results. Instead we implement the network approach (see Appendix~III) to be able to find $B_{\text{opt}}$ in a more general scenario. Nevertheless, as it can be observed in the last plot of each row in Figure~\ref{fig:reconstructions}, $B_{\text{opt}}$ contains some patterns that reflect, to some extent, that $B$ keeps the same singular system but with different singular values. Namely, $B$ is updated in a similar way as in \eqref{eq:B_update}. With the current implementation we could also use more complex regularization functionals $R$, in order to reduce the gap between our analytic approach and the original DIP. This is also a motivation for further research.

\begin{figure*}
  \includegraphics[width=\textwidth]{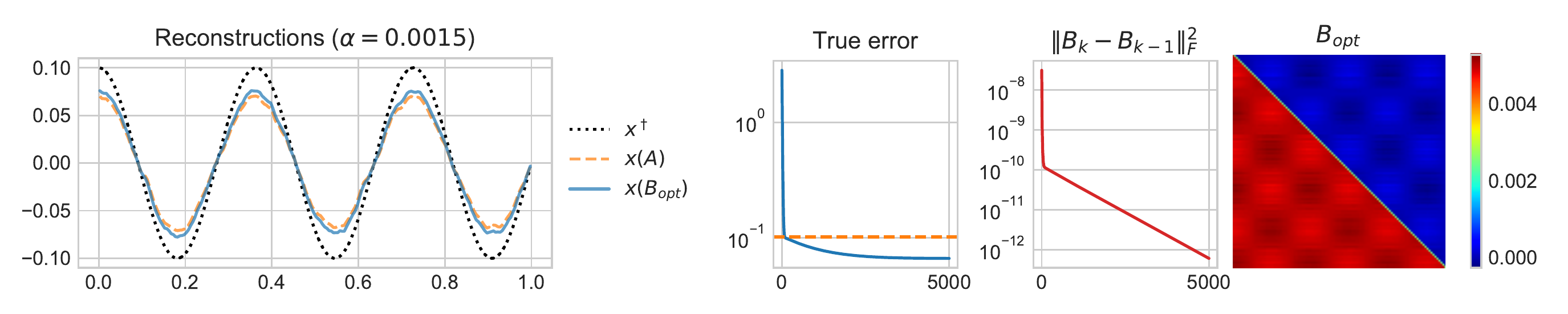}
  \includegraphics[width=\textwidth]{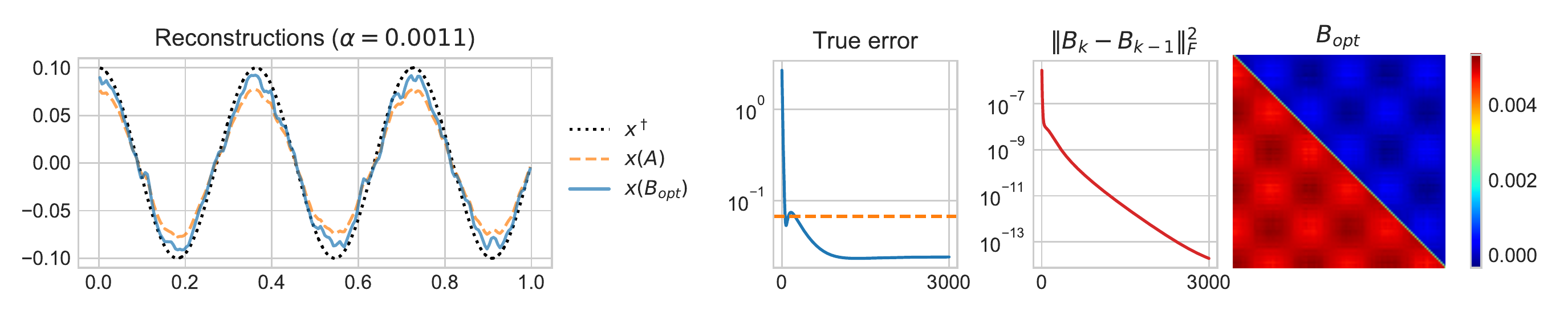}
  \includegraphics[width=\textwidth]{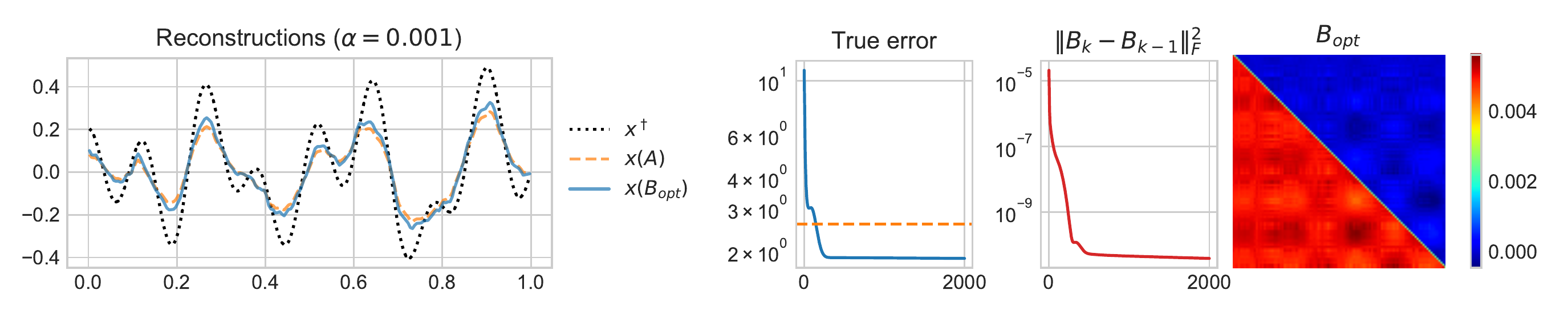}
  \includegraphics[width=\textwidth]{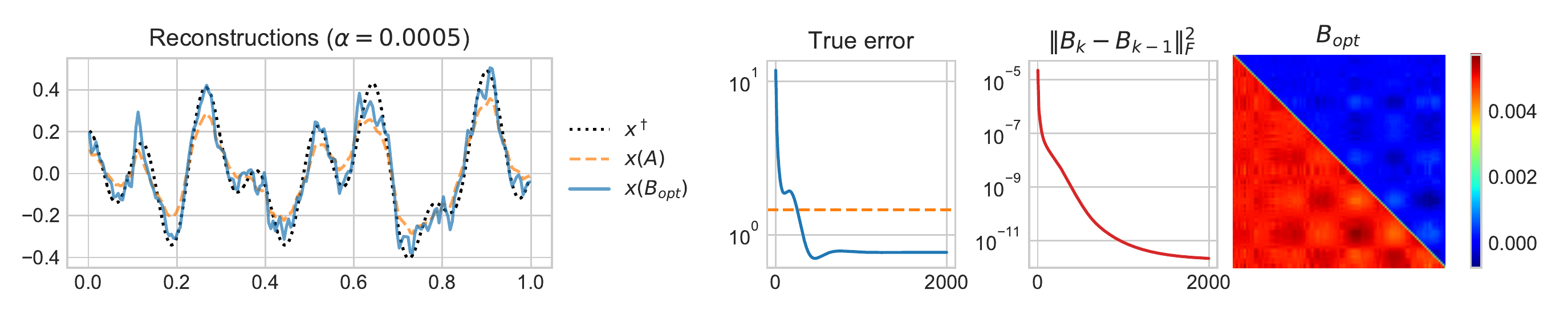}
  \centering
  \caption{Reconstructions corresponding to $y^\delta$ as in Figure~\ref{fig:first_example} (first and second row) and Figure~\ref{fig:extra_example} (third and fourth row) for different values of $\alpha$. The broken line in the second plot of each row indicates the true error of the standard Tikhonov solution $x(A)$. The horizontal axis in the second and third plots indicates the number of weights updates.}
  \label{fig:reconstructions}
\end{figure*}

\section{Summary and conclusion}
In this paper, we investigated the concept of deep inverse priors / regularization by architecture. This approach neither requires massive amounts of ground truth / surrogate data, nor pretrained models / transfer learning. The method is based on a single measurement. We started by giving different qualitative interpretations of what regularization is and specifically how regularization by architecture fits into this context.

We followed up with the introduction of the analytic deep prior by explicitly showing how unrolled proximal gradient architectures, allow for a somewhat transparent regularization by architecture. Specifically, we showed that their results can be interpreted as solutions of optimized Tikhonov functionals and proved precise equivalences to regularization techniques. We further investigated this point of view with an academic example, where we implemented the analytic deep inverse prior and tested its numerical applicability. The results confirmed our theoretical findings and showed promising results.

There is obviously, like in deep learning in general, much work to be done in order to have a good understanding of deep inverse priors, but we see much potential in the idea of using deep architectures to regularize inverse problems; especially since an enormous part of the deep learning community is already concerned with the understanding of deep architectures.

\newpage

\begin{acknowledgements}

S. Dittmer and T. Kluth acknowledge the support by the Deutsche Forschungsgemeinschaft (DFG) within the framework of GRK 2224/1 ``Pi 3 : Parameter
Identification—Analysis, Algorithms, Applications''.

D. Otero Baguer acknowledges the financial support by the Deut\-sche Forschungsgemeinschaft (DFG, German Research Foundation) –- Projektnummer 276397488 –- SFB 1232, sub-project ``P02-Heuristic, Statistical and Analytical Experimental Design''.

Peter Maass acknowledges funding by the European Union's Horizon 2020 research and innovation programme under the Marie Sklo\-dow\-ska-Curie Grant Agreement No. 765374, sub-project 'Data driven model adaptations of coil sensitivities in MR systems'.

The authors would also like to  acknowledge the anonymous reviewers for their helpful comments and suggestions.

\end{acknowledgements}

\newpage

\bibliographystyle{spmpsci}
\bibliography{w2}

\clearpage

\section*{Appendix I: A reminder on minimization of Tikhonov functionals and the LISTA approach}
\label{ap:reminder}

In this section we consider only linear operators $A$ and we review the well known theory for the Iterative Soft Shrinkage Algorithm (ISTA) as well as the slightly more general Proximal Gradient (PG) \cite{combettes2005splitting,nesterov04convex} method for minimizing Tikhonov functionals of the type
\begin{equation}
J(x)= \frac{1}{2} \| A x - y^\delta\|^2 + \alpha R(x).
\end{equation}

We recapitulate the main steps in deriving ISTA and PG, as far as we need it for our motivation. The necessary first-order condition for a minimizer is given by
\begin{equation}
0 \in A^*(Ax-y^\delta) + \alpha \partial R(x).
\end{equation}

Multiplying with an arbitrary real positive number $\lambda$ and adding $x$ plus rearranging yields
\begin{equation}
x - \lambda A^*(Ax-y^\delta) \in x + \lambda \alpha \partial R(x).
\end{equation}
For convex $R$, the term of the right hand side is inverted by the (single valued) proximal mapping of $\lambda \alpha R$, which yields
\begin{equation}
\prox_{\lambda \alpha R} \left(x - \lambda A^*(Ax-y^\delta) \right) = x. 
\end{equation}

Hence this is a fixed point condition, which is a necessary condition for all minimizers of $J$. Turning the fixed point condition into an iteration scheme yields the PG meth\-od
\begin{eqnarray} \label{ista}
x^{k+1} &=& \prox_{\lambda \alpha R} \left(x^k - \lambda A^*(Ax^k-y^\delta) \right) \\
&=&  \prox_{\lambda \alpha R} \left( (I - \lambda A^*A)x^k + \lambda A^*y^\delta \right) \ .
\end{eqnarray}

This structure is also the motivation for LISTA \cite{lista} approaches where fully connected networks with $L$ internal layers of identical size are used. Moreover, in some versions of LISTA, the affine maps between the layers are assumed to be identical. The values at the $k$-th layer are denoted by $x^k$,
hence,
\begin{equation}
x^{k+1} = \phi \left(W x^k + b\right).
\end{equation}
\noindent
LISTA then trains $(W,b)$ on some given training data. More precisely, it trains two matrices $W=I - \lambda A^*A$ and $S=\lambda A^*$ such that
\begin{equation}
x^{k+1} = \phi \left(W x^k + S y^\delta\right).
\end{equation}

This derivation can be rephrased as follows.

\begin{lemma}
Let $\varphi_\Theta$, $\Theta = (W, b)$, denote a fully connected network with input $x^0$ and $L$-internal layers. Further assume, that the activation function is identical to a proximal mapping for a convex functional $\lambda \alpha R: X \rightarrow I\!\!R$. Assume $W$ is restricted, such that $I-W$ is positive definite, i.e., there exists a matrix $B$ such that
\begin{equation}
I - W = \lambda B^* B.
\end{equation}
Furthermore, we assume that the bias term is fixed as $b = \lambda B^*y^\delta $. Then $\varphi_\Theta(z)$ is the $L$-th iterate of an ISTA scheme with starting value $x^0=z$ for minimizing
\begin{equation}
J_B(x)= \frac{1}{2} \| B x - y^\delta\|^2 + \alpha R(x).
\end{equation}
\end{lemma}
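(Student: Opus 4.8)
The plan is to show that, under the three restrictions imposed on $\Theta$, the layer recursion defining $\varphi_\Theta$ coincides term-by-term with the proximal gradient iteration for $J_B$ recapitulated at the beginning of this appendix; everything then follows by induction. There is no genuine analytic obstacle here — the statement is essentially a definition-chase — so the only points deserving a word of care are the single-valuedness of the proximal map and the fact that $J_B$ together with its gradient depends on $B$ only through $B^*B$ and $B^*y^\delta$, which makes the conclusion insensitive to the particular factor $B$ chosen with $I - W = \lambda B^*B$.

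First I would repeat the PG derivation given above verbatim, but with $A$ replaced by $B$. The first-order optimality condition for a minimizer of $J_B$ is $0 \in B^*(Bx - y^\delta) + \alpha\,\partial R(x)$; multiplying by $\lambda > 0$, adding $x$ and rearranging gives $x - \lambda B^*(Bx - y^\delta) \in x + \lambda\alpha\,\partial R(x)$. Since $R$ is convex, the right-hand operator is inverted by the single-valued map $\prox_{\lambda\alpha R}$, so at every minimizer the fixed-point identity $\prox_{\lambda\alpha R}\!\left((I - \lambda B^*B)x + \lambda B^*y^\delta\right) = x$ holds. Turning it into an iteration yields the scheme
\begin{equation}
x^{k+1} = \prox_{\lambda\alpha R}\!\left((I - \lambda B^*B)x^k + \lambda B^*y^\delta\right).
\end{equation}

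Next I would substitute the hypotheses into the network recursion $x^{k+1} = \phi(Wx^k + b)$. Putting $\phi = \prox_{\lambda\alpha R}$, $W = I - \lambda B^*B$ (which is exactly the restriction $I - W = \lambda B^*B$) and $b = \lambda B^*y^\delta$ reproduces precisely the right-hand side of the displayed scheme, so the two recursions are literally the same map. As both are launched from the common initial value $x^0 = z$, an immediate induction on $k$ shows that the network activations agree with the PG iterates for all $k = 0,\dots,L$; in particular $\varphi_\Theta(z) = x^L$ equals the $L$-th ISTA/PG iterate for $J_B$, as claimed. The nearest thing to a subtlety — and hence the only step I would dwell on — is checking that this PG derivation transfers from $A$ to $B$ unchanged: this requires only that $\lambda$ be an admissible step size relative to $B^*B$ and that $J_B$ be well defined independently of the square root $B$ of $\lambda^{-1}(I - W)$, both of which follow from convexity of $R$ and from the gradient of the data term being $B^*Bx - B^*y^\delta$.
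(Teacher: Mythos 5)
Your proposal is correct and follows essentially the same route as the paper: the paper's own proof is the one-line remark that the lemma ``follows directly from equation (\ref{ista})'', i.e., from the proximal gradient derivation preceding it, which you have simply written out explicitly with $A$ replaced by $B$ and the induction over layers made precise.
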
 

\begin{proof} 
Follows directly from equation (\ref{ista}).
\end{proof}

\section*{Appendix II: Proofs}
\label{ap:proofs}
\subsection*{\textit{Proof of Lemma \ref{lemma:derivative}}}
$F$ is a functional which maps operators $B$ to real numbers, hence, its derivative is given by
\[\partial F (B) = \left[\partial x (B)^*\right]A^*(Ax(B) - y^\delta),\]
which follows from classical variational cal\-cu\-lus, see, e.g.\ \cite{engl}. The derivative of $x(B)$ with respect to $B$ can be computed using the fix point condition for a minimizer of $J_B$, namely
\[\prox_{\lambda \alpha R} \left(x(B) - \lambda B^*(Bx-y^\delta)\right) - x(B) = 0,\]
which is equivalent to
\[\psi(x(B),\,B)=0.\]
We apply the implicit function theorem and obtain the derivative 
\begin{equation}
\partial x(B)=- \psi_x(x(B), B)^{-1} \psi_B(x(B), B).
\end{equation}
Combining $\partial F (B) $ with $\partial x(B)$ yields the required result.$\square$

\subsection*{\textit{Proof of Lemma \ref{lemma:derivative_fb}}}
We start with the the explicit description of the iteration
\begin{equation}
    B^{\ell+1}= B^\ell - \eta \partial F (B^\ell)
\end{equation}
with
\begin{equation}
    \partial F (B) = \partial x (B)^*A^*(Ax(B) - y^\delta).
\end{equation}

The derivative of $x(B)$ with respect to $B$ is a linear map $\partial x(B): {\cal L}(X,Y) \rightarrow X.$ For $\delta B \in {\cal L}(X,Y)$ we obtain
\begin{align}
    \begin{split}
        \partial x (B) (\delta B) = &- \left( B^*B + \alpha I \right)^{-2} \left( \delta B^* B + B^*\delta B \right) B^* y^\delta \\
        &+ \left( B^*B + \alpha I\right)^{-1} \delta B^* y^\delta.
    \end{split}
\end{align}
The adjoint operator is a mapping from $X$ to ${\cal L} (X,Y)$, which can be derived from the defining relation
\begin{equation}
\inner{\partial x(B)(\delta B)}{z}_X = \inner{\delta B}{\left[ \partial x(B)  \right]^* z}_{{\cal L}(X,Y)}   \ .  
\end{equation}
Hence,
\begin{align}
\begin{split}
\left[ \partial x(B) \right]^* z =  & -BB^*y^\delta z^*   {\left( B^*B + \alpha I  \right)^{-2} } \\     
                                    & -B{\left( B^*B + \alpha I  \right)^{-2}}
z ({y^\delta})^* B \\
                                    & + {y^\delta} z^*   { \left( B^*B + \alpha
I  \right)^{-1} }.    
\end{split}
\end{align}
Here, ${y^\delta} z^* \in {\cal L}(X,Y)$ denotes a linear map, which maps an $x \in X$ to $\inner{z}{x}_X \ y^\delta$. 

First of all, we now aim at determining explicitly $\partial F (B)$ at the starting point of our iteration, i.e., with $B^0=A$.

From this follows the rather lengthy expression 
\begin{align}
\partial F (A)  &=\partial x (A)^*A^*(Ax(A) - y^\delta) \\ 
\begin{split}
                &= \alpha AA^*y^\delta ({y^\delta})^*  A  {\left( A^*A + \alpha I  \right)^{-3} }  \\   
                & \ \ \ +\alpha A { \left( A^*A + \alpha I  \right)^{-3} } A^* y^\delta  ({y^\delta})^* A \\
                & \ \ \ -\alpha {y^\delta} ({y^\delta})^*   A{ \left( A^*A + \alpha I  \right)^{-2} }.
\end{split}
\end{align}
This enables us to compute the update
\begin{equation}
B_1= A - \eta \partial  F (A)
\end{equation}
as well as the output of the analytic deep prior approach after one iteration of updating $B$ (assuming a suitably chosen $\eta$)
\begin{equation}
x(B_1)=(B_1^*B_1+\alpha I)^{-1}B_1^*y^\delta.
\end{equation}

\subsection*{\textit{Proof of Lemma \ref{lemma:example_l2}}}

A lengthy computation exploiting $B^0=A$ and $ \beta_0=\sigma$  shows that  the singular value $\beta_\ell$ of $u$ in the spectral decomposition of $B^\ell$ obeys the iteration
\begin{equation}
    \beta_{\ell+1}=  \beta_\ell - \eta   \sigma (\sigma + \delta)^2 (\alpha + \beta_\ell^2  -\sigma \beta_\ell) \frac{\beta_\ell^2 - \alpha}{(\beta_\ell^2 + \alpha)^3},
    \label{eq:fixedpointiteration}
\end{equation}
 i.e.,
\begin{equation}
\label{B_update}
B^{\ell+1} = B^\ell -  c_\ell v u^*
\end{equation}
with 
\[c_\ell=c(\alpha, \delta, \sigma, \eta)=\eta   \sigma (\sigma + \delta)^2 (\alpha + \beta_\ell^2  -\sigma \beta_\ell) \frac{\beta_\ell^2 - \alpha}{(\beta_\ell^2 + \alpha)^3} \ .\]

We will now consider the stability of the fixed points of the sequence $x(B^\ell)$, i.e., we will analyze the fixed points of the iteration described in (\ref{eq:fixedpointiteration}), that is,
\begin{equation}
\beta_{\ell+1}=  \beta_\ell - c(\beta_\ell),
\end{equation}
where
\begin{equation}
    c(\beta) = \eta \sigma (\sigma + \delta)^2 (\alpha + \beta^2  -\sigma \beta) \frac{\beta^2 - \alpha}{(\beta^2 + \alpha)^3}.
\end{equation}
This iteration in-turn gives you via the Tikhonov filter function, the sequence
\begin{equation}
        x(\beta_\ell) = \frac{\beta_\ell}{\beta_\ell^2 + \alpha}(\sigma + \delta ) u
\end{equation}
of reconstructions.
To find the fixed points of the iteration, we analyze the real roots of $c$,
which are
\begin{itemize}
    \item $\beta^{(1)} = \sqrt{\alpha}$,
    \item $\beta^{(2)} = -\sqrt{\alpha}$,
    \item $\beta^{(3)} = \frac{\sigma}{2} + \sqrt{\frac{\sigma^2}{4} - \alpha}$, for $\sigma \ge 2 \sqrt{\alpha}$ and
    \item $\beta^{(4)} = \frac{\sigma}{2} - \sqrt{\frac{\sigma^2}{4} - \alpha}$, for $\sigma \ge 2 \sqrt{\alpha}$.
\end{itemize}
Simple calculations show that
\begin{itemize}
    \item $\partial_\beta c(\beta^{(1)})  
    \begin{cases}
      >0, & \sigma < 2 \sqrt{\alpha}\\
      \le0, & \text{otherwise.}
    \end{cases}$
    \item  $\partial_\beta c(\beta^{(2)}) > 0$
    \item  $\partial_\beta c(\beta^{(3)}) > 0$, for $\sigma \ge 2 \sqrt{\alpha}$ and 
    \item  $\partial_\beta c(\beta^{(4)}) > 0$ for $\sigma \ge 2 \sqrt{\alpha}$.
\end{itemize}
This leads to the single attractive fixed point $\beta^{(1)}$ for $\sigma < 2\sqrt{\alpha}$ and the two attractive fixed points $\beta^{(3)}$ and $\beta^{(4)}$ otherwise. Since,
\begin{equation}
    x(\beta^{(3)}) = x(\beta^{(4)}),
\end{equation}
we therefore have a unique reconstruction, namely
\begin{equation}
    x = 
    \begin{cases}
      \frac{1}{2\sqrt{\alpha}}(\sigma + \delta) u, & \sigma < 2 \sqrt{\alpha}\\
      \frac{1}{\sigma}(\sigma + \delta) u, & \text{otherwise.}
    \end{cases}
\end{equation}
$\square$

\subsection*{\textit{Proof of Theorem \ref{th:minimizer}}}
Let $B=\sum \beta_i v_i u_i^*$. We want to find $\{\beta_i\}$ to minimize
\begin{equation}
F(B) = \Vert A x(B, y^\delta) -y^\delta\Vert^2.
\label{eq:jb}
\end{equation}
The Tikhonov solution is given by
\begin{equation}
    x(B) = \sum \frac{\beta_i}{\beta_i^2 + \alpha}\inner{y^\delta}{v_i}u_i,
\end{equation}
the result of applying the operator $A$ to $x(B)$ is
\begin{equation}
    Ax(B) = \sum \frac{\sigma_i\beta_i}{\beta_i^2 + \alpha}\inner{y^\delta}{v_i}v_i
    \label{eq:aty_delta}
\end{equation}
and
\begin{equation}
    y^\delta = \sum \inner{y^\delta}{v_i}v_i.
    \label{eq:y_delta}
\end{equation}
Inserting (\ref{eq:aty_delta}) and (\ref{eq:y_delta}) in (\ref{eq:jb}) yields
\begin{equation}
    F(B) = \sum \left|(\frac{\sigma_i\beta_i}{\beta_i^2 + \alpha}-1)\inner{y^\delta}{v_i}\right|^2 .
    \label{eq:jb_sum}
\end{equation}
In order to minimize $F(B)$, we should set $\frac{\sigma\beta_i}{\beta_i^2 + \alpha} = 1$ which implies $\beta_i^2 - \sigma \beta_i + \alpha = 0$. The roots of the previous equation are $\beta_i=\frac{\sigma_i}{2} \pm \sqrt{\frac{\sigma_i^2}{4} - \alpha}$ and they are real only  if $\frac{\sigma_i^2}{4} \geq \alpha$. If it does not hold then $\frac{\alpha\beta_i}{\beta_i^2+\alpha} < 1$ and the optimal choice is to find its maximum value which is attained at $\beta_i=\sqrt{\alpha}$.

Therefore we set
\begin{equation}
\beta_i = 
    \begin{cases}
       \frac{\sigma_i}{2} + \sqrt{\frac{\sigma_i^2}{4} - \alpha} & \quad \sigma \geq 2\sqrt{\alpha}\\
       \sqrt{\alpha} & \quad \sigma < 2\sqrt{\alpha} \\ 
     \end{cases}
\end{equation}
and we minimize every term in the sum (\ref{eq:jb_sum}), which means we have found singular values $\{\beta_i\}$ that minimize $F(B)$.$\square$

\subsection*{\textit{Proof of Theorem \ref{th:order_optimal}}}
In order to prove that $K_\alpha$ is a proper order optimal regularization method we need to check if the corresponding filters $F_\alpha$ from (\ref{eq:soft_filter}) satisfy the three conditions of optimality \cite{louis,rieder}.

These conditions state that a filter $F_\alpha:\mathbb{R} \to \mathbb{R}$ is an order optimal regularization filter if $\exists\, \gamma,\, c_1,\,c_2,\,c_3 > 0$ such that
\begin{enumerate}
    \item $\sup_\sigma{\left|F_\alpha(\sigma)\sigma^{-1}\right|} \leq c_1\alpha^{-\gamma}$
    \item $\sup_\sigma{\left|1-F_\alpha(\sigma)\right|\sigma^{\nu}} < c_2\alpha^{\gamma\nu}$
    \item $\forall \alpha>0, \sigma>0: \left|F_\alpha(\sigma)\right| \leq c_3$
\end{enumerate}

In the following we show that they hold $\forall \nu > 0$ with $\gamma=\frac{1}{2},\, c_1=\frac{1}{2},\, c_2=2^\nu,\, c_3=1$:
\begin{enumerate}
\item[i] {If $\sigma \geq 2 \sqrt{\alpha}$
    \begin{enumerate}
        
        \item [1.]{ $\sup_\sigma{\left|F_\alpha(\sigma)\sigma^{-1}\right|} = \sup_\sigma{\left|\sigma^{-1}\right|} \leq \frac{1}{2}\alpha ^{-\frac{1}{2}}$}
        
        \item [2.]{$\sup_\sigma{\left|1-F_\alpha(\sigma)\right|\sigma^{\nu}}=0\leq \alpha ^\nu$}
        
        \item [3.]{$\forall \alpha>0, \sigma>0: \left|F_\alpha(\sigma)\right|=1$}
    
    \end{enumerate}
}

\item[ii]{ If $\sigma < 2\sqrt{\alpha}$
\begin{enumerate}
    \item [1.] {$\sup_\sigma{\left|F_\alpha(\sigma)\sigma^{-1}\right|}= \frac{1}{2}\alpha ^{-\frac{1}{2}}$}
    
    \item [2.] \(\begin{aligned}[t]
        {\sup}_\sigma{\left|1-F_\alpha(\sigma)\right|\sigma^{\nu}}&=\sup_\sigma{\left|\frac{2\sqrt{\alpha}-\sigma}{2\sqrt{\alpha}}\right|\sigma^{\nu}}\\ & \leq 2^\nu \alpha^{\frac{\nu}{2}}
    \end{aligned}\)
    
    \item [3.]{$\forall \alpha>0, \sigma>0: \left|F_\alpha(\sigma)\right|=\frac{\sigma}{2\sqrt{\alpha}} \leq 1$}
    
    \end{enumerate}
}
\end{enumerate}$\square$

\section*{Appendix III: Numerical experiments}
\label{ap:numerics}

In this section, we provide details about the implementation of the analytic deep inverse prior and the academic example. We start by discretizing the integration operator, which yields the matrix ${A_n\in\mathbb{R}^{n \times n}}$, that has $\frac{h}{2}$ on the main diagonal, $h$ everywhere under the main diagonal and $0$ above (here $h=\frac{1}{n}$). In our experiments we use $n=200$.

The analytic deep inverse prior network $\varphi^L_\Theta$ is implemented using Python and Tensorflow~\cite{tensorflow2015-whitepaper}. Initially, we create the matrix $B \in \mathbb{R}^{n \times n}$ and add $L$ fully connected layers to the network, all having the same parameters $\Theta = (W,b)$, with weight matrix $W= I- \lambda B^TB$, bias $b=\lambda B^T y^\delta$ and activation function given by the $\ell_2$ proximal operator. That means the network contains in total $4 \times 10^4$ parameters (the number of components in $B$). For the experiments shown in the paper, the input $z$ is randomly initialized with a small norm and $\lambda$ is $\frac{1}{\mu}$, where $\mu$ is the biggest eigenvalue of $A^TA$.

We follow the DIP approach and minimize \eqref{basicloss} using gradient descent. To guarantee that $\varphi^L_\Theta(z) = x(B)$ holds, the network should have thousands of layers, because of the slow convergence of the PG method. This is prohibitive from the implementation point of view. Therefore, we consider only a reduced network with a small number of layers, $L=10,$ and at each iteration we set the input of the network to be the network's output after the previous iteration. This is equivalent to adding $L$ new identical layers after each update of $B$, with 
\begin{equation}
{W_i = I- \lambda B_i^TB_i}
\end{equation}
and 
\begin{equation}
b_i=\lambda B_i^T y^\delta,
\end{equation}
where $B_i$ refers to the value of $B$ at the $i$-th iteration. After $k$ iterations, we implicitly create a network that has $(k+1)L$ layers, however, each time we update $B$, we back-propagate only through the last $L$ layers.

\begin{figure}
  \tikzset{every picture/.style={line width=0.75pt}} 

\begin{tikzpicture}[x=0.75pt,y=0.75pt,yscale=-1,xscale=1]

\draw   (330.2,146.18) .. controls (330.2,137.68) and (336.99,130.79) .. (345.36,130.79) .. controls (353.73,130.79) and (360.52,137.68) .. (360.52,146.18) .. controls (360.52,154.68) and (353.73,161.57) .. (345.36,161.57) .. controls (336.99,161.57) and (330.2,154.68) .. (330.2,146.18) -- cycle ;
\draw    (360.52,146.18) -- (379.89,146.27) ;
\draw [shift={(381.89,146.28)}, rotate = 180.26] [fill={rgb, 255:red, 0; green, 0; blue, 0 }  ][line width=0.75]  [draw opacity=0] (8.93,-4.29) -- (0,0) -- (8.93,4.29) -- cycle    ;

\draw    (308.77,146.18) -- (328.2,146.18) ;
\draw [shift={(330.2,146.18)}, rotate = 540] [fill={rgb, 255:red, 0; green, 0; blue, 0 }  ][line width=0.75]  [draw opacity=0] (8.93,-4.29) -- (0,0) -- (8.93,4.29) -- cycle    ;

\draw    (579.69,145.91) -- (595.27,145.79) ;
\draw [shift={(597.27,145.78)}, rotate = 539.5699999999999] [fill={rgb, 255:red, 0; green, 0; blue, 0 }  ][line width=0.75]  [draw opacity=0] (8.93,-4.29) -- (0,0) -- (8.93,4.29) -- cycle    ;

\draw  [color={rgb, 255:red, 0; green, 0; blue, 0 }  ,draw opacity=1 ][dash pattern={on 4.5pt off 4.5pt}] (382.86,127.68) -- (419.39,127.68) -- (419.39,164.78) -- (382.86,164.78) -- cycle ;
\draw    (420.52,145.68) -- (439.89,145.77) ;
\draw [shift={(441.89,145.78)}, rotate = 180.26] [fill={rgb, 255:red, 0; green, 0; blue, 0 }  ][line width=0.75]  [draw opacity=0] (8.93,-4.29) -- (0,0) -- (8.93,4.29) -- cycle    ;

\draw  [color={rgb, 255:red, 0; green, 0; blue, 0 }  ,draw opacity=1 ][dash pattern={on 4.5pt off 4.5pt}] (442.86,127.18) -- (479.39,127.18) -- (479.39,164.28) -- (442.86,164.28) -- cycle ;
\draw    (523.77,145.78) -- (539.89,145.78) ;
\draw [shift={(541.89,145.78)}, rotate = 180] [fill={rgb, 255:red, 0; green, 0; blue, 0 }  ][line width=0.75]  [draw opacity=0] (8.93,-4.29) -- (0,0) -- (8.93,4.29) -- cycle    ;

\draw  [color={rgb, 255:red, 0; green, 0; blue, 0 }  ,draw opacity=1 ][dash pattern={on 4.5pt off 4.5pt}] (542.86,127.18) -- (579.39,127.18) -- (579.39,164.28) -- (542.86,164.28) -- cycle ;
\draw    (480.02,146.18) -- (499.39,146.27) ;
\draw [shift={(501.39,146.28)}, rotate = 180.26] [fill={rgb, 255:red, 0; green, 0; blue, 0 }  ][line width=0.75]  [draw opacity=0] (8.93,-4.29) -- (0,0) -- (8.93,4.29) -- cycle    ;

\draw   (371.3,169.68) .. controls (371.31,174.35) and (373.64,176.68) .. (378.31,176.67) -- (470.06,176.46) .. controls (476.73,176.45) and (480.07,178.77) .. (480.08,183.44) .. controls (480.07,178.77) and (483.39,176.43) .. (490.06,176.41)(487.06,176.42) -- (581.81,176.2) .. controls (586.48,176.19) and (588.81,173.86) .. (588.8,169.19) ;

\draw (345.36,146.18) node [scale=1] [align=left] {$\displaystyle z$};
\draw (401.12,146.23) node [scale=1.2,] [align=left] {$\displaystyle \varphi ^{L}_{\Theta _{0}}$};
\draw (613.36,145.18) node [scale=1.2] [align=left] {$\displaystyle x( B)$};
\draw (421.46,171.85) node  [align=left] {$ $};
\draw (515.12,145.73) node [scale=1.2,] [align=left] {$\displaystyle ...$};
\draw (480.86,190.68) node  [align=left] {$\displaystyle k+1$};
\draw (435.53,193.35) node  [align=left] {$ $};
\draw (461.12,145.73) node [scale=1.2,] [align=left] {$\displaystyle \varphi ^{L}_{\Theta _{1}}$};
\draw (561.12,145.73) node [scale=1.2,] [align=left] {$\displaystyle \varphi ^{L}_{\Theta _{k}}$};

\end{tikzpicture}
  \centering
  \caption{The implicit network with $(k+1)L$ layers. Here $\varphi^L_{\Theta_k}$ refers to a block of $L$ identical fully connected layers with weights ${\Theta_k = (W_k,\,b_k)}$.}
  \label{network}
\end{figure}
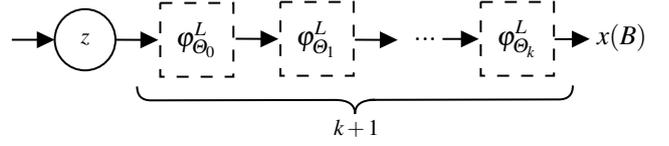

\end{document}